\def\BibTeX{{\rm B\kern-.05em{\sc i\kern-.025em b}\kern-.08em
    T\kern-.1667em\lower.7ex\hbox{E}\kern-.125emX}}
\tikzstyle{vertex}=[circle, draw, inner sep=0pt, minimum size=6pt]
\newtheorem{theorem}{Theorem}[section]
\theoremstyle{definition}
\newtheorem{definition}{Definition}[section]
\newtheorem{lemma}[theorem]{Lemma}
\algnewcommand{\algorithmicgoto}{\textbf{go to}}%
\algnewcommand{\Goto}[1]{\algorithmicgoto~\ref{#1}}%
\pgfplotsset{compat=1.17}
\begin{document}

\title{cgSpan: Closed Graph-Based Substructure Pattern Mining}

\author{\IEEEauthorblockN{Zevin Shaul}
\IEEEauthorblockA{
\textit{Informatica}\\
szevin@informatica.com}
\and
\IEEEauthorblockN{Sheikh Naaz}
\IEEEauthorblockA{
\textit{University of Wisconsin–Madison}\\
naazish.sheikh@gmail.com}
}
\IEEEpubid{\makebox[\columnwidth]{978-1-6654-3902-2/21/\$31.00~\copyright~2021 IEEE \hfill} \hspace{\columnsep}\makebox[\columnwidth]{ }}
\maketitle
\begin{abstract}
gSpan is a popular algorithm for mining frequent subgraphs. cgSpan (\underline{c}losed \underline{g}raph-based \underline{s}ubstructure \underline{pa}tter\underline{n} mining) is a gSpan extension that only mines closed subgraphs. A subgraph g is closed in the graphs database if there is no proper frequent supergraph of g that has equivalent occurrence with g.
cgSpan adds the Early Termination pruning method to the gSpan pruning methods, while leaving the original gSpan steps unchanged.

cgSpan also detects and handles cases in which Early Termination should not be applied.

To the best of our knowledge, cgSpan is the first publicly available implementation for closed graphs mining. 
\end{abstract}

\begin{IEEEkeywords}
frequent graph, graph representation, closed pattern, canonical
label
\end{IEEEkeywords}

\section{Introduction}

The goal of Frequent Subgraph Mining (FSM) is to find subgraphs in a given labeled graphs set that occur more frequently than a given value. This value, known as support, is usually expressed as a percentage of the set size.  FSM algorithms can be designed to produce two types of output. The first type outputs all existing frequent subgraphs, while the second type only outputs closed frequent subgraphs.  A graph g is closed in a database if there exists no proper frequent supergraph of g that has equivalent occurrence with g. The first type of output may have two drawbacks. The first drawback is that the total number of frequent subgraphs discovered becomes very large. For example, if a Star frequent subgraph \cite{b1} with $k$ edges is discovered, all $2^k$ subgraphs of a Star graph have the same or a greater support and are therefore also discovered. The second drawback is that not closed frequent subgraphs can be of no interest to the task at hand. For example, frequent parts of the molecule are of no interest in mining chemical graphs set.

gSpan \cite{b2} is a popular FSM algorithm that discovers all frequent subgraphs. 
In this article, we introduce cgSpan, an efficient extension of gSpan that only detects closed frequent graphs.
cgSpan was developed to handle the practical use case of ETLs (Extract Transform Load) \cite{b3} refactoring. Each ETL can be modeled as a labeled graph. Many ETLs will share common subgraphs that implement the same logic, such as SSN (social security number) field detection and validation. cgSpan allows us to discover such repetitive logic and refactor it into a standalone ETL that is referenced by other ETLs. Such refactoring improves the maintenance, readability, and design of ETLs.

To date there are a number of gSpan implementations that use different programming languages \cite{b4} \cite{b5} \cite{b6}. Such implementations can of course be extended to cgSpan with relatively little programming effort. Our cgSpan implementation,\cite{b7}, extends the Python implementation \cite{b4}

CloseGraph \cite{ b8} was the first algorithm that was developed for frequent closed subgraphs extraction. cgSpan improves CloseGraph efficiency in two important ways:
\begin{enumerate}[label=(\roman*)]
\item cgSpan only examines extensions from the vertices on the right-most path to confirm that frequent subgraph is closed.
\\For the same purpose, ClosedGraph must examine extensions from all vertices.
\item cgSpan uses an efficient look-up table to check if early termination can be applied to the graph. Only a single lookup of the edge projections set of the last DFS code of the graph is required. After the lookup, the equivalent occurrence is checked only with a very limited number of closed graphs.
\\For the same purpose, CloseGraph must construct all possible extensions of the graph's parent, check every extension equivalent occurrence with a parent and compare each extension with the graph using the lexicographical order.
\end{enumerate}   
\IEEEpubidadjcol
Finally, we provide an efficient method to handle early termination failure.
We have discovered a number of different cases where applying early termination causes cgSpan to miss closed graphs. Such cases are detected and dealt with.

The rest of the paper is organized as follows. In Section \ref{section:PRELIMINARYCONCEPTS} we provide references to the definitions and notations used in the following sections. 

In Sections \ref{subsection:Orderofsupergraphdiscovery} and \ref{subsection:EarlyTerminationDetection} we establish the theoretical basis of the cgSpan algorithm.

Section \ref{subsection:EarlyTerminationImplementation} formulates the early termination algorithm of cgSpan.

Section \ref{subsection:EarlyTerminationFailure} formulates the method for the detection of early termination failure and the discovery of missing closed frequent subgraphs.

Finally, the cgSpan algorithm is provided in Section \ref{subsection:cgSpanImplementation}.

The results of the experiments are reported in Section \ref{section:EXPERIMENTSANDPERFORMANCESTUDY}. 

\section{Preliminary Concepts}\label{section:PRELIMINARYCONCEPTS}
The concepts used throughout this paper are listed below. Each concept is accompanied by references to the original definition in \cite{b2} and \cite{b8}.

\begin{definition}[{\Large labeled graph}] \cite[Definition.~1]{b2}, \cite[Section.~2]{b8}
A \emph{labeled graph} has labels associated with its edges
and vertices. We denote the vertex set of a graph \emph{g} by \emph{V(g)}, the edge set by \emph{E(g)}. A label function, \emph{l}, can map a vertex or an edge to a label.
\end{definition}
\begin{definition}[{\Large subgraph isomorphism}] \cite[Definition.~2]{ b2}, \cite[Definition.~1]{b8}
A subgraph isomorphism is an injective function $f\colon V(g) \to V(g^\prime)$, such that $(1)\forall u \in V(g), l(u) = l^\prime(f(u))$, and $(2)\forall (u,v) \in E(g), (f(u), f(v)) \in E(g^\prime)$  and $l(u, v) = l^\prime(f(u), f(v))$, where $l$ and $l^\prime$ are the label function of $g$ and $g^\prime$ respectively.
\end{definition}

\begin{definition}[{\Large occurrence}]. \cite[Definition.~5]{b8} Let $\varphi(g,g^\prime)$ represent the number of possible subgraph isomorphisms of $g$ in $g^\prime$. Given graph $g$ and graph dataset $D = \{G_1,G_2, ... ,G_n\}$, the occurrence of $g$ in $D$ is the sum of the number of subgraph isomorphisms of $g$ in every graph of $D$,
i.e. $\sum_{i=1}^{n} \varphi(g,G_i)$ denoted by $\mathcal{I}(g,D)$.
\end{definition}

\begin{definition}[{\Large graph extension}]. \cite[Section.~2]{b8} A graph $g$ can be extended by adding a new edge $e$. A new graph is denoted by $g \diamond_x e$.
\end{definition}

\begin{definition}[{\Large extendable subgraph isomorphism}]. \cite[Section.~4]{b8} Given a graph $g^\prime = g \diamond_x e$, $f$ a subgraph isomorphism of $g$ in $G$ and $f^\prime$ a subgraph isomorphism of $g^\prime$ in $G$. If $\exists \rho$, $\rho$ a subgraph isomorphism of $g$ in $g^\prime$, $\forall v f(v) = f^\prime(\rho(v))$, then we call $f$ extendable and $f^\prime$ an extended subgraph isomorphism from $f$.

We denote the number of such extendable $f$ by $\phi(g,g^\prime,G)$
\end{definition}

\begin{definition}[{\Large extended occurrence}]. \cite[Definition.~6]{ b8} Given graph $g^\prime =g \diamond_x e$ and graph dataset $D = \{G_1,G_2, ... ,G_n\}$, the extended occurrence of $g^\prime$ in $D$ w.r.t $g$ is the sum of the number of extendable
subgraph isomorphisms of $g$ (w.r.t $g^\prime$) in every graph among
$D$, i.e. $\sum_{i=1}^{n} \phi(g,g^\prime,G_i)$, denoted by $\mathcal{L}(g, g^\prime, D)$.
\end{definition}

\begin{definition}[{\Large equivalent occurrence}]. \cite[Section.~4]{b8} Given graph $g^\prime = g \diamond_x e$ and graph dataset $D$, if $\mathcal{I}(g,D) = \mathcal{L}(g, g^\prime, D)$, we say that $g$ and $g^\prime$ have the equivalent occurrence, which means wherever $g$ occurs in $D$, $g^\prime$ occurs.
\end{definition}

\begin{definition}[{\Large closed frequent subgraph mining}] \cite[Section.~2]{b8} 
If $g$ is a \emph{subgraph} of $g^\prime$, then $g^\prime$ is a \emph{supergraph} of $g$, denoted
by $g \subseteq g^\prime$ (\emph{proper supergraph}, if $g \subset g^\prime$). Given a labeled graph dataset, $D = \{G_1,G_2, ... ,G_n\}, support(g)$ (or $frequency(g)$) denotes the percentage (or number) of
graphs (in $D$) in which $g$ is a subgraph. The set of \textbf{frequent graph pattern}s, $FS$, includes all the graphs whose support is no less than a minimum support threshold, \emph{min\_sup}. The set of \textbf{closed frequent graph pattern}s, $CS$, is defined as follows:
\\ $CS=\{g|g \in FS\: and\: \nexists g^\prime \in FS\: such\: that\: g \subset g^\prime\: and\: g\: and\: g^\prime\: have\: equivalent\: occurrence\:\}$. 
\\ Since $CS$ includes no graph that has a proper supergraph with equivalent occurrence, we have $CS \subseteq FS$. The problem of Closed Frequent Subgraph Mining is to find the complete set of $CS$ in the graph dataset \emph{D} with a given \emph{min\_sup}.

Please note that definition of $CS$ in this article is different from $CS$ definition in \cite[Section.~2]{b8}. The definition in \cite[Section.~2]{ b8} is formulated as  $CS=\{g|g \in FS\: and\: \nexists g^\prime \in FS\: such\: that\: g \subset g^\prime\: and\: support(g) = support(g^\prime)\}$. The reason for this change in definition is that if $support(g) = support(g^\prime)$, but $\mathcal{I}(g,D) > \mathcal{L}(g, g^\prime, D)$, we consider $g$ to be a closed graph in $D$.

For example, in Figure~\ref{fig:figure-1}, $g^\prime_1$ is a supergraph of $g^\prime_2$ and both have $support=2$. However, $g^\prime_2$ occurs three times in $D$, while $g^\prime_1$ occurs only twice. Therefore, $g^\prime_2$ is considered to be a closed graph.  

In the rest of this paper we will simply refer to $CS$ as $S$.
\end{definition}

\begin{figure}[htb!]
\centering
\subfloat[$G_1$]{%
\centering
\begin{tikzpicture}[
every edge/.style = {draw=black,very thick},
 vrtx/.style args = {#1/#2}{%
      circle, draw, thick, fill=white,
      minimum size=5mm, label=#1:#2}
                    ]
\node(A) [vrtx=left/$v_1$] at (0, 2) {W};
\node(B) [vrtx=left/$v_2$] at (-1.1, 0) {X};
\node(C) [vrtx=left/$v_3$] at (0, 0) {X};
\node(D) [vrtx=below/$v_4$] at (-1.1,-2) {Y};
\node(E) [vrtx=below/$v_5$] at (0,-2) {S};
\node(F) [vrtx=below/$v_6$] at (1.1,-2) {Z};
\path   (A) edge node[left]{$a$} (B)
        (A) edge node[left]{$a$} (C)
        (C) edge node[left]{$b$} (D)
 (C) edge node[left]{$c$} (E)
 (C) edge node[left]{$d$} (F)
 (A) edge node[right]{$f$} (F);
\end{tikzpicture}
    }
\hfil
\subfloat[$G_2$]{%
\centering
\begin{tikzpicture}[
every edge/.style = {draw=black,very thick},
 vrtx/.style args = {#1/#2}{%
      circle, draw, thick, fill=white,
      minimum size=5mm, label=#1:#2}
                    ]
\node(A) [vrtx=left/$v_1$] at (1, 2) {W};
\node(B) [vrtx=left/$v_2$] at (1, 0) {X};
\node(C) [vrtx=below/$v_3$] at (-0.1,-2) {Y};
\node(D) [vrtx=below/$v_4$] at (1,-2) {T};
\node(E) [vrtx=below/$v_5$] at (2.1,-2) {Z};
\path   (A) edge node[left]{$a$} (B)
(B) edge node[left]{$b$} (C)
(B) edge node[left]{$e$} (D)
(B) edge node[left]{$d$} (E)
(A) edge node[right]{$f$} (E);
\end{tikzpicture}
}
\\
\hfil
\hfil
\subfloat[$g_1^\prime$]{%
\centering
\begin{tikzpicture}[
		every edge/.style = {draw=black,very thick},
		vrtx/.style args = {#1/#2}{%
			circle, draw, thick, fill=white,
			minimum size=5mm, label=#1:#2}
		]
		\node(A) [vrtx=left/$v_1$] at (1, 2) {W};
		\node(B) [vrtx=left/$v_2$] at (1, 0) {X};
		\node(C) [vrtx=below/$v_3$] at (0,-2) {Y};
		\node(E) [vrtx=below/$v_4$] at (2,-2) {Z};
		\path   (A) edge node[left]{$a$} (B)
		(B) edge node[left]{$b$} (C)
		(B) edge node[left]{$d$} (E)
		(A) edge node[right]{$f$} (E);
	\end{tikzpicture}	
}
\hfil
\hfil
\hfil
\subfloat[$g_2^\prime$]{%
\centering
    \begin{tikzpicture}[
		every edge/.style = {draw=black,very thick},
		vrtx/.style args = {#1/#2}{%
			circle, draw, thick, fill=white,
			minimum size=5mm, label=#1:#2}
		]
		\node(A) [vrtx=left/$v_2$] at (1, 2) {X};
		\node(B) [vrtx=left/$v_1$] at (1, 0) {W};
		\node(C) [vrtx=below/$v_3$] at (1, -2) {Z};
		\path   (A) edge node[left]{$a$} (B);
		\path   (B) edge node[left]{$f$} (C);
	\end{tikzpicture}
}

\caption{\textbf{Closed frequent graph pattern} $CS=\{g_1^\prime,g_2^\prime\}$ of $D = \{G_1,G_2\}$}
    \label{fig:figure-1}
\end{figure}

\begin{definition}[{\Large DFS Code}] \cite[Definition.~4]{b2}, \cite[Definition.~2]{b8} Given a DFS tree $T$ for a graph
$G$, an edge sequence $(e_i)$ can be constructed based on $\prec_{E,T}$,
such that $e_i \prec_{E,T} e_{i+1}$, where $i = 0 \ldots \lvert E\rvert - 1$. $(e_i)$ is called a DFS code, denoted as $code(G,T)$.
\end{definition}

\begin{definition}[{\Large DFS Lexicographic Order}] \cite[Definition.~5]{ b2}, \cite[Definition.~3]{b8} Suppose $Z = \{code(G,T)|T$ is a DFS tree of G$\}$, i.e., Z is a set containing all DFS codes for all the connected labeled graphs. Suppose there is a linear order $\prec_L$ in the label set $(L)$, then the lexicographic combination of $\prec_{E,T}$ and $\prec_L$ is a linear order $\prec_e$ on the set $E_T \times L \times L \times L$. \textbf{DFS Lexicographic Order} is a linear order defined as follows. If $\alpha = code(G_{\alpha},T_{\alpha}) = (a_0, a_1, \ldots , a_m)$ and $\beta = code(G_{\beta},T_{\beta}) = (b_0, b_1, \ldots , b_n),\alpha,\beta \in Z$, then $\alpha \leqslant \beta$ iff either of the following is true.
\begin{enumerate}[label=(\roman*)]
\item $\exists t, 0 \leqslant t \leqslant min(m, n), a_k = b_k for\: k < t,a_t \prec_e b_t$
\item $a_k = b_k for\: 0 \leqslant k \leqslant m, and\: n \geqslant m.$
\end{enumerate}
\end{definition}

\begin{definition}[{\Large Minimum DFS Code}]
\cite[Definition.~6]{b2}, \cite[Definition.~4]{b8}
Given a graph $G$, $Z(G) = \{code(G,T) | \forall T, \textrm{ T is a DFS tree of G} \}$, based on DFS lexicographic order, the minimum one, $min(Z(G))$,
is called \textbf{Minimum DFS Code} of $G$. It is also a canonical
label of $G$.
\end{definition}

\begin{definition}[{\Large DFS Code's Parent and Child}]\label{DFSCodeParent andChild} 
\cite[Definition.~7]{b2} 
Given a DFS code $\alpha = (a_0, a_1, \ldots , a_m)$, any valid DFS
code $\beta = (a_0, a_1 , \ldots , a_m, b)$, $\beta$ is called $\alpha$'s \textbf{child}, and $\alpha$ is called $\beta$'s \textbf{parent}.
\end{definition}

\begin{definition}[{\Large DFS Code Tree}]
\cite[Definition.~8]{b2}
In a DFS Code Tree, each node represents a DFS code, the relation between parent node and child node complies with the relation described in Definition \ref{DFSCodeParent andChild}. The relation between siblings is consistent with the DFS lexicographic order. That is, the pre-order	search of DFS Code Tree follows the DFS lexicographic order. The Tree is denoted as $\mathbb{T}$.
\end{definition}

\begin{definition}[{\Large DFS Code's Ancestors and Descendants}] 
\cite[Definition.~9]{b2}
Given two DFS codes, $\alpha$ and $\beta$, in $\mathbb{T}$, if there is a straight path from $\alpha$ to $\beta$, then $\alpha$ is called an ancestor of $\beta$, and $\beta$ is called a descendant of $\alpha$, denoted by $anc(\beta)$ = \{ all ancestors of $\beta$\}, and $des(\alpha)$ = \{ all descendants of $\alpha$\}.
\end{definition}

\begin{definition}[{\Large right-most extension}]
\cite[Section.~3.2]{b8}
Given a graph $g$ and a DFS tree $T$ in $g$, $e$ can be extended from the right-most vertex connecting to any other vertices on the right-most path (backward extension); or $e$ can be extended from vertices on the right-most path and introduce a new vertex (forward extension). We call these two kinds
of restricted extension as right-most extension: denoted by $g \diamond_r e$.
\end{definition}
\section{cgSpan Algorithm}\label{section:cgSpanALGORITHM}
\subsection{Order of Supergraph Discovery}\label{subsection:Orderofsupergraphdiscovery}
\begin{theorem}\label{DiscoveryOrder}
Given two graphs $G$ and $G^\prime$, ${G \subset G^\prime}$, ($G^\prime$ is a proper supergraph of $G$),  
$\alpha=(a_1,a_2, \ldots, a_n)$ and $\beta=(b_1,b_2,\ldots, b_m),  m > n$ be the DFS codes of $G$ and $G^\prime$ respectively when they are discovered for the first time in the DFS Code Tree,
then one of the following holds:
\begin{enumerate}[label=(\roman*)]
\item \label{i} $a_k = b_k for\: 0 \leqslant k \leqslant n$ i.e. $\alpha$ is ancestor of $\beta$ 
\item \label{ii} $G^\prime$ is discovered for the first time before $G$ is discovered for the first time.
\end{enumerate}
\end{theorem}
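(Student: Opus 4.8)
The plan is to first translate the statement about ``first discovery'' into a statement about minimum DFS codes, and then to compare $\min(Z(G))$ with $\min(Z(G^\prime))$ directly. Because the pre-order search of the DFS Code Tree $\mathbb{T}$ visits codes exactly in DFS lexicographic order, the first time a graph is encountered is precisely at its lexicographically smallest code, i.e. at its Minimum DFS Code. Hence I would set $\alpha = \min(Z(G))$ and $\beta = \min(Z(G^\prime))$, and reformulate (ii) as the purely lexicographic statement $\beta \prec \alpha$ (``$G^\prime$ before $G$''). Since $\alpha$ and $\beta$ have different lengths they are distinct codes, so the DFS lexicographic order gives a clean dichotomy: either $\beta \prec \alpha$, which is already \ref{ii}, or $\alpha \prec \beta$, in which case I must establish \ref{i}, that $\alpha$ is a prefix of $\beta$.

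The core idea is to manufacture a single DFS code of $G^\prime$ that begins with $\alpha$. Starting from the DFS traversal of $G$ that realizes $\alpha$, I would keep its DFS tree and vertex numbering fixed and continue the traversal through the edges of $G^\prime \setminus G$, recording them after the entries of $\alpha$. This yields a code $\gamma$ of $G^\prime$ whose length-$n$ prefix equals $\alpha$, so $\gamma \in Z(G^\prime)$ and therefore $\beta \preceq \gamma$ by minimality of $\beta$. I would then compare $\beta$ and $\gamma$ at their first point of disagreement. If they agree on all of the first $n$ entries, then the first $n$ entries of $\beta$ are exactly $\alpha$, so $\alpha$ is a prefix of $\beta$ and \ref{i} holds. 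If instead they first disagree at some position $t \leqslant n$, then minimality forces $b_t \prec_e \gamma_t$, and since $t \leqslant n$ we have $\gamma_t = a_t$; because $\beta$ and $\alpha$ then share the prefix $a_1,\ldots,a_{t-1}$ and differ with $b_t \prec_e a_t$, we obtain $\beta \prec \alpha$, which is \ref{ii}. Either branch delivers the theorem.

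The delicate step, and the one I expect to be the main obstacle, is the claim that $\alpha$ can always be extended to a code of $G^\prime$ simply by appending the remaining edges. This is not automatic: a right-most extension may only attach a new edge at a vertex on the right-most path of the current DFS tree, and the tree realizing $\alpha$ may leave some vertex $w$ off its right-most path while $G^\prime$ carries an edge of $G^\prime \setminus G$ incident to $w$ (``stranding''), in which case no appended code of $G^\prime$ starts with $\alpha$. I would treat this case separately and show it already forces \ref{ii}. The key observation is that all forward right-most extensions of a fixed prefix introduce the \emph{same} new vertex index $j$, so among forward extensions $\prec_{E,T}$ compares only the source index and prefers the largest one; consequently a minimum code never backtracks away from a vertex that still has an incident edge available in $G^\prime$. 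Thus, at the first position where the minimum traversal of $G^\prime$ can descend through $w$ while the traversal realizing $\alpha$ cannot (the extra edge at $w$ lives in $G^\prime$ but not in $G$), the code $\beta$ takes a forward edge with a strictly larger source index than $\alpha$ does, making $b_t \prec_e a_t$ and hence $\beta \prec \alpha$. Making this ``deepening beats branching'' comparison precise at the exact divergence index, and verifying that backward edges and the vertex/edge-label comparisons cannot overturn it, is the technical heart of the argument.
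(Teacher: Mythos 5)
Your overall plan --- reduce ``first discovery'' to minimum DFS codes and then show that the first lexicographic divergence between $\beta$ and $\alpha$ must favor $\beta$ --- is sound, and it is a genuinely different route from the paper's, which proceeds by induction on the length of $\alpha$, reducing $\alpha$ to its parent code and invoking minimality of $\beta$ level by level. Your non-stranding branch (extend $\alpha$ to a full code $\gamma$ of $G^\prime$, compare $\beta$ with $\gamma$) is complete. The gap is in the stranding branch. You locate the divergence of $\beta$ from $\alpha$ ``at the first position where the minimum traversal of $G^\prime$ can descend through $w$ while the traversal realizing $\alpha$ cannot,'' and conclude $b_t \prec_e a_t$ there by a forward-versus-forward depth comparison. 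But nothing forces $\beta$ to agree with $\alpha$ up to that position: $\beta$ may leave $\alpha$ strictly earlier, for reasons having nothing to do with $w$ --- because $G^\prime$ admits a backward extension of a shorter shared prefix that $G$ does not, or a forward extension from the \emph{same} source vertex carrying a smaller edge or vertex label, or a different embedding of the prefix into $G^\prime$ that enables a smaller continuation. At such an earlier divergence the ``deepening beats branching'' comparison does not apply, and comparing entries at the $w$-position is meaningless for the lexicographic order because the prefixes no longer match. Even at the true divergence index, $b_t$ need not be a deeper forward edge at all; it can be a backward edge or a label improvement. So the stranding case, as structured around $w$, is not closed.

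The repair is already implicit in your own minimality argument, and it eliminates both the stranding split and the construction of $\gamma$. Let $t$ be the first index at which $\beta$ and $\alpha$ differ; if there is none, $\alpha$ is a prefix of $\beta$ and conclusion (i) holds. The shared prefix $(a_1,\dots,a_{t-1})$ is realized in $G$ by the first $t-1$ steps of the traversal giving $\alpha$, and $a_t$ is a legal right-most extension of that partial traversal in $G$. Composing this partial traversal with a subgraph isomorphism of $G$ into $G^\prime$ realizes the same prefix in $G^\prime$, with $a_t$ still a legal right-most extension; and since $G^\prime$ is connected, any partial DFS traversal of $G^\prime$ completes to a full DFS code. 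Hence some code of $G^\prime$ agrees with $\beta$ on the first $t-1$ entries and has $a_t$ at position $t$, so minimality of $\beta$ forces $b_t \preceq_e a_t$, and since $b_t \neq a_t$ we get $b_t \prec_e a_t$, i.e.\ $\beta \prec \alpha$ and conclusion (ii) holds. This single observation --- every one-edge continuation legal in $G$ is legal in $G^\prime$, so the minimality of $\beta$ decides every divergence in $\beta$'s favor --- is the same engine the paper runs inside its induction (its repeated ``otherwise we could construct a smaller DFS code of $G^\prime$'' contradictions), applied once at the divergence index instead of once per level.
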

\begin{proof}
As stated in \cite{b2} "According to the definition of Minimum DFS code, the first occurrence of DFS code of a graph in $\mathbb{T}$ (pre-order) is its minimum DFS code." Therefore $\alpha = min(\alpha)$ and $\beta = min(\beta)$.
\\ The proof is by induction on n, the length of $\alpha$.
\\ \textbf{Base Case:} $n=1$:
\\ if $a_1=b_1$, then \ref{i} is satisfied.
\\ if $a_1 \neq b_1$, then $a_1 > b_1$. This holds because exists $b_j, j > 1$ such that $a_1 = b_j$. If $a_1 < b_1$, we could construct another DFS code of $G^\prime$  $\gamma = (b_j, b_1^\prime, \ldots, b_{m-1}^\prime)$.
$\gamma < \beta$, which contradicts $\beta$ being minimum DFS code.
\\ Since $a_1 > b_1$, $\beta$ is constructed before $\alpha$, $G^\prime$ is discovered before $G$ and \ref{ii} is satisfied.
\\ \textbf{Inductive hypothesis}: Suppose the theorem holds for all values of $n$ up to some $k$, $k \geq 1$.
\\ \textbf{Inductive step}: Let $n=k+1$.
 \ref{i} or \ref{ii} hold for $1  \leq n \leq k$ and we need to show that \ref{i} or \ref{ii} hold for n = k + 1.
\\ Let $\gamma=min(parent(\alpha))$
\\ If \ref{ii} is true for $\gamma$, then $G^\prime$ is discovered before $\gamma$ (inductive hypothesis), $\gamma$ is discovered before or at the same time as $parent(\alpha)$ ($\gamma \leq parent(\alpha))$ and $parent(\alpha)$ is discovered before $\alpha$. Therefore $G^\prime$ is discovered before $\alpha$ and \ref{ii} is true.
\\ If \ref{i} is true for $\gamma$, then $\gamma=(b_1,b_2,\ldots,b_k)$. By $\gamma$ definition $parent(\alpha) \geq \gamma$.
\\ If $parent(\alpha) > \gamma$ then $\alpha$ is discovered after $anc(\gamma)$ (all ancestors of $\gamma$). $\beta \in anc(\gamma)$ and therefore $G^\prime$ is discovered before $\alpha$
\\ If $parent(\alpha) = \gamma$ then $\alpha = (b_1,b_2,\ldots, b_k, b_j), j \geq k + 1$.
\\If $j=k+1$ then \ref{i} is true.
\\If $j>k+1$ then $b_j > b_{k+1}$ and \ref{ii} is true.
(If $b_j > b_{k+1}$ was not true, we could construct another DFS code of $G^\prime$  $\delta = (b_1,b_2,\ldots, b_k, b_j,b_{k+2}^\prime, \ldots, b_{m}^\prime)$.
$\delta < \beta$, which contradicts $\beta$ being minimum DFS code.)  
\end{proof}

\subsection{Early Termination Detection}\label{subsection:EarlyTerminationDetection}

Lemma \ref{EarlyTerminationClosedEquivalentGraph} and Lemma \ref{EarlyTerminationClosedGraphsDiscovery} provide a theoretical basis for cgSpan early termination detection.

\begin{lemma}\label{EarlyTerminationClosedEquivalentGraph}
For each frequent graph $g_0$ in $D$, exist $g_1,g_2,\ldots,g_n n \geq 0$, such that:
\begin{enumerate}[label=(\roman*)]
	\item $g_n$ is a closed graph in $D$
	\item $g_{i+1} = g_i \diamond_x e_i$ i.e. $g_{i+1}$ is extension of $g_i$
	\item $g_i$ and $g_{i+1}$ have equivalent occurrence.
\end{enumerate}
We say that $g_0$ and each of $g_i 1 \leq i \leq n$ have \textbf{transitive equivalence occurrence}. 

\end{lemma}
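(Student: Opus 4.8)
The plan is to construct the chain greedily by repeatedly extending $g_0$ one edge at a time while preserving equivalent occurrence, stopping as soon as a closed graph is reached. Formally, I would define the sequence by the rule: if $g_i$ is closed in $D$, set $n=i$ and stop; otherwise choose a single-edge extension $g_{i+1}=g_i \diamond_x e_i$ having equivalent occurrence with $g_i$ and continue. Granting that such a sequence is well-defined and finite, conditions (ii) and (iii) hold by construction, and condition (i) holds at the terminal graph by the stopping rule.

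The key step I would isolate is a \emph{single-edge reduction}: if $g_i$ is not closed, then some one-edge extension $g_i \diamond_x e$ already has equivalent occurrence with $g_i$. To prove it, I start from the fact that non-closedness gives a proper frequent supergraph $g^*$ with equivalent occurrence, i.e. every subgraph isomorphism $f$ of $g_i$ into some $G_j \in D$ extends to an isomorphism of $g^*$ into $G_j$. Since $g^*$ is connected and strictly contains $g_i$, I would pick an edge $e$ of $g^*$ incident to a vertex of $g_i$ but not lying in $g_i$, so that $g_i \diamond_x e \subseteq g^*$. Restricting each extended isomorphism of $g^*$ back to $g_i \diamond_x e$ then shows that every $f$ is extendable to $g_i \diamond_x e$, giving $\mathcal{I}(g_i,D)=\mathcal{L}(g_i,g_i \diamond_x e,D)$, which is exactly equivalent occurrence for the single edge.

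For finiteness I would first note that equivalent occurrence is support-preserving: if every isomorphism of $g_i$ extends to $g_{i+1}$, then each database graph containing $g_i$ also contains $g_{i+1}$, while the reverse inclusion is automatic from $g_i \subset g_{i+1}$, so $support(g_i)=support(g_{i+1})$. Hence all $g_i$ are frequent and each is a subgraph of some graph in $D$; since each step adds an edge, the edge count is bounded by the size of the largest graph in $D$, and the chain terminates at some $g_n$.

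The step I expect to be the main obstacle is the single-edge reduction, together with its use in verifying that the terminal $g_n$ is truly closed. Because equivalent occurrence is defined primitively only for one-edge extensions, the content of the lemma is that a multi-edge supergraph witnessing non-closedness can always be peeled down to a one-edge witness; the delicate point is checking that restricting the extended isomorphisms preserves extendability of \emph{every} occurrence of $g_i$, so that the full count $\mathcal{I}(g_i,D)$ is matched and not merely the supports. Once this reduction is established, closedness of $g_n$ is just its contrapositive: having no equivalent-occurrence single-edge extension forces having no equivalent-occurrence proper supergraph at all.
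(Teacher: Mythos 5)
Your proposal is correct and follows essentially the same route as the paper: greedily extend one edge at a time while preserving equivalent occurrence, with termination guaranteed because the edge count is bounded by $\max_{G \in D}\lvert E(G)\rvert$. The only difference is that you prove the single-edge reduction explicitly (peeling a multi-edge witness of non-closedness down to a one-edge extension with equivalent occurrence), whereas the paper simply asserts that a graph with no equivalent-occurrence one-edge extension ``is closed by definition''; your extra step makes rigorous a point the paper leaves implicit.
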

\begin{proof}
	$ $\newline
	If $g_0$ cannot be extended to a graph with equivalent occurrence, then $g_0$ is closed by definition and the conditions for $n=0$ are met.
	
	Otherwise $g_0$ can be extended to a graph $g_1$, $g_1= g_0 \diamond_x e_0$, so that $g_0$ and $g_1$ have equivalent occurrence.
	
	By induction, $g_i$ is either a closed graph or can be extended to a graph $g_{i+1}$, $g_{i+1}= g_i \diamond_x e_i$, so that $g_i$ and $g_{i+1}$ have equivalent occurrence.
	
	Since in each induction step $i$ the extended graph $g_i$ is one edge larger than in the previous step, the maximum number of steps $n$ will not exceed $\max_{G \in D} \lvert E(G) \rvert$. $g_n$  cannot be extended to a graph with equivalent occurrence and is therefore closed.   
\end{proof}

\begin{lemma}\label{EarlyTerminationClosedGraphsDiscovery}
After the DFS tree search of the graph $g$ with a DFS code $(a_1,a_2, \ldots, a_n)$ has been completed, i.e. all graphs whose minimum DFS code starts with $(a_1,a_2, \ldots, a_n)$ are discovered, all closed graphs that include $g$, $\{g^\prime | g \subseteq g^\prime, g^\prime\: is\: closed\: in\: D\}$, are also discovered.
\end{lemma}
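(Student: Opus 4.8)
The plan is to prove the statement as an almost immediate corollary of Theorem~\ref{DiscoveryOrder}, since that theorem already controls the relative discovery order of a graph and any proper supergraph. In fact I would prove the slightly stronger claim that \emph{every} supergraph of $g$ (not only the closed ones) is discovered once the subtree of $\mathbb{T}$ rooted at $\alpha=(a_1,\ldots,a_n)$ has been traversed; specializing to closed supergraphs then yields the lemma. I would first record that, by the gSpan fact quoted in the proof of Theorem~\ref{DiscoveryOrder}, the DFS code with which a graph first appears in the pre-order traversal is exactly its minimum DFS code, so ``discovered for the first time'' and ``minimum DFS code'' may be used interchangeably throughout.

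Let $g'$ be a closed graph with $g\subseteq g'$. If $g'=g$, then $g$ is the root of the subtree whose search is assumed complete, so $g$ has already been discovered and there is nothing to show. Otherwise $g\subset g'$, so $g'$ has strictly more edges than $g$; writing $\alpha=(a_1,\ldots,a_n)$ and $\beta=(b_1,\ldots,b_m)$ for the minimum DFS codes of $g$ and $g'$, we have $m>n$, and Theorem~\ref{DiscoveryOrder} applies.

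I would then split on the two alternatives the theorem provides. In case~\ref{i}, $a_k=b_k$ for $0\leqslant k\leqslant n$, i.e.\ $\alpha$ is a prefix of $\beta$; thus the minimum DFS code of $g'$ starts with $(a_1,\ldots,a_n)$, and by the hypothesis of the lemma---completion of the search of $g$'s subtree discovers exactly those graphs whose minimum DFS code starts with $(a_1,\ldots,a_n)$---the graph $g'$ has been discovered. In case~\ref{ii}, $g'$ is discovered for the first time strictly before $g$ is; but $g$ is discovered at the root of its own subtree, i.e.\ when that subtree's search begins, and that search has by assumption already finished, so $g'$ was discovered even earlier. In either case $g'$ is among the discovered graphs, which proves the lemma.

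The main point to get right---rather than a genuine obstacle---is the bookkeeping that ties Theorem~\ref{DiscoveryOrder} to the completion hypothesis: at the instant the subtree rooted at $\alpha$ is exhausted, the set of discovered graphs consists precisely of those whose minimum DFS code either precedes $\alpha$ in DFS lexicographic order or has $\alpha$ as a prefix, and I must verify that case~\ref{ii} lands in the first of these sets and case~\ref{i} in the second. It is also worth flagging, as a sanity check, that closedness of $g'$ is never actually used; the discovery-order argument holds for arbitrary supergraphs, and the restriction to closed graphs in the statement merely reflects the intended use in the early-termination analysis.
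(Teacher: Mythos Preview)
Your proposal is correct and follows essentially the same approach as the paper, which also derives the lemma directly from Theorem~\ref{DiscoveryOrder}; the paper's proof is in fact a one-line appeal to that theorem, and your version simply spells out the two cases and the bookkeeping more explicitly. Your observation that closedness of $g'$ is never used is accurate and consistent with the paper's argument.
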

\begin{proof}
	Since every closed graph that contains $g$ is also a supergraph of $g$, this lemma follows directly from Theorem~\ref{DiscoveryOrder}.
\end{proof}

When a DFS code $s$ is right-most extended with an edge $e$, we have to decide whether a further extensions of a new graph, $s \diamond_r e$, leads to a closed graph discovery. If this is not the case, the further DFS tree right-most extension of $s \diamond_r e$ should be terminated.

If $s \diamond_r e$ is itself a closed graph in $D$, we do not terminate its right-most extension.

If $s \diamond_r e$ is not a closed graph in $D$, cgSpan checks all closed graphs discovered up to this point.

If one of these closed graphs, $g$, and $s \diamond_r e$ have a \textbf{transitive equivalence occurrence}, there is no need to extend $s \diamond_r e$ any further, and only extensions of $g$ must be examined to find closed graphs. The DFS tree search of $g$ was already completed and by Lemma \ref{EarlyTerminationClosedGraphsDiscovery} all closed graphs that include $g$ have already been discovered. 
In such a case, cgSpan terminates $s \diamond_r e$ right-most extensions. 

If up to this point, no closed graph which has \textbf{transitive equivalence occurrence} with $s \diamond_r e$ has been discovered, from Lemma \ref{EarlyTerminationClosedEquivalentGraph} we know that such a closed graph exists and from Lemma \ref{EarlyTerminationClosedGraphsDiscovery} we know that such a graph will be discovered when $s \diamond_r e$ will be further right-most extended. Therefore, $s \diamond_r e$ right-most extension should not be terminated in such a case.  

We can conclude that cgSpan never miss an opportunity to terminate DFS tree extension wherever possible.

Let's see how cgSpan early termination works by examining steps in DFS lexicographical search of $D = \{G_1,G_2\}$ from Figure \ref{fig:figure-1} as shown in Figure~\ref{fig:figure-2}.
The lexicographical generation order of the generated patterns is: $g_1$, $g_2$, $g_3$, $g_4$ and $g_5$. When $g_5$ is discovered, cgSpan decides whether early termination should be applied to $g_5$. This decision is based solely on a fact if $g_5$ has a transitive equivalent occurrence with any closed graph discovered so far. The only closed graph discovered before $g_5$ was discovered is $g_4$. $\mathcal{I}(g_5,D) = \mathcal{L}(g_5,g_3, D) = 2$ and therefore $g_5$ and $g_3$ have equivalent occurrence. $\mathcal{I}(g_3,D) = \mathcal{L}(g_3,g_4, D) = 2$ and therefore $g_3$ and $g_4$ have equivalent occurrence. Therefore $g_5$ has extended equivalent occurrence with $g_4$.
Since $g_5$ has extended equivalent occurrence with a closed graph $g_4$, further right-most extension of $g_5$ is early terminated.

\begin{figure}[htb!]
	\centering
	\subfloat[$g_1$]{
	\centering
		\begin{tikzpicture}[
			every edge/.style = {draw=black,very thick},
		vrtx/.style args = {#1/#2}{%
			circle, draw, thick, fill=white,
			minimum size=3mm, label=#1:#2}
		]
		\node(A) [vrtx=left/$W$] at (0.5, 1) {};
		\node(B) [vrtx=left/$X$] at (0.5, 0) {};
		\path   (A) edge node[left]{$a$} (B);
		\end{tikzpicture}
	}
	\hfil    
	\subfloat[$g_2$]{
	\centering
		\begin{tikzpicture}[
		every edge/.style = {draw=black,very thick},
		vrtx/.style args = {#1/#2}{%
			circle, draw, thick, fill=white,
			minimum size=3mm, label=#1:#2}
		]
		\node(A) [vrtx=left/$W$] at (0.5, 1) {};
		\node(B) [vrtx=left/$X$] at (0.5, 0) {};
		\node(C) [vrtx=below/$Y$] at (0,-1) {};
		\path   (A) edge node[left]{$a$} (B)
		(B) edge node[left]{$b$} (C);
		\end{tikzpicture}
	}
	\\
	\subfloat[$g_3$]{
	\centering
		\begin{tikzpicture}[
			every edge/.style = {draw=black,very thick},
		vrtx/.style args = {#1/#2}{%
			circle, draw, thick, fill=white,
			minimum size=3mm, label=#1:#2}
		]
		\node(A) [vrtx=left/$W$] at (0.5, 1) {};
		\node(B) [vrtx=left/$X$] at (0.5, 0) {};
		\node(C) [vrtx=below/$Y$] at (0,-1) {};
		\node(E) [vrtx=below/$Z$] at (1,-1) {};
		\path   (A) edge node[left]{$a$} (B)
		(B) edge node[left]{$b$} (C)
		(B) edge node[left]{$d$} (E);
		\end{tikzpicture}
	}
	\hfil
	\subfloat[$g_4$]{
	\centering
		\begin{tikzpicture}[
		every edge/.style = {draw=black,very thick},
		vrtx/.style args = {#1/#2}{%
			circle, draw, thick, fill=white,
			minimum size=3mm, label=#1:#2}
		]
		\node(A) [vrtx=left/$W$] at (0.5, 1) {};
		\node(B) [vrtx=left/$X$] at (0.5, 0) {};
		\node(C) [vrtx=below/$Y$] at (0,-1) {};
		\node(E) [vrtx=below/$Z$] at (1,-1) {};
		\path   (A) edge node[left]{$a$} (B)
		(B) edge node[left]{$b$} (C)
		(B) edge node[left]{$d$} (E)
		(A) edge node[right]{$f$} (E);
		\end{tikzpicture}
	}
	\hfil
	\subfloat[$g_5$]{%
\centering
	\begin{tikzpicture}[
		every edge/.style = {draw=black,very thick},
		vrtx/.style args = {#1/#2}{%
			circle, draw, thick, fill=white,
			minimum size=3mm, label=#1:#2}
		]
		\node(A) [vrtx=left/$W$] at (0.5, 1) {};
		\node(B) [vrtx=left/$X$] at (0.5, 0) {};
		\node(C) [vrtx=below/$Z$] at (1,-1) {};
		\path   (A) edge node[left]{$a$} (B)
		(B) edge node[left]{$d$} (C);
	\end{tikzpicture}
}
	\caption{\textbf{Pattern Generation Order} of $D = \{G_1,G_2\}$ from Figure \ref{fig:figure-1}}
	\label{fig:figure-2}
\end{figure}

\subsection{Early Termination Implementation}\label{subsection:EarlyTerminationImplementation}

When a new graph $s \diamond_r e$ is discovered in the DFS search, cgSpan must check whether $s \diamond_r e$ has a transitive equivalent occurrence with any closed graph discovered so far and early terminate $s \diamond_r e$ DFS extension if such a closed graph exists.
In fact, we can limit this check to a small number of closed graphs discovered so far by maintaining a closed graphs hash table \cite{b9}.

Assume $s \diamond_r e$ has a transitive equivalent occurrence with a closed graph $g^\prime$. Let $\mathbb{F}=\{f\}$ and $\mathbb{F^\prime}=\{f^\prime\}$ be sets of isomorphisms of $s \diamond_r e$ and $g^\prime$ into $D=\{G_1,G_2, \ldots,G_n\}$ respectively. 
Then there exists an edge $e^\prime \in g^\prime$ such that $\{f^\prime(e^\prime), f^\prime \in \mathbb{F^\prime}\} = \{f(e), f \in \mathbb{F}\}$ i.e. $e^\prime$ and $e$ are injected into the same set of edges in $D$.

Therefore $s \diamond_r e$ has to be checked for having transitive equivalent occurrence only with closed graphs with such an edge $e^\prime$. 

Such sets of edges in $D$ are used as keys in the hash table of the closed graphs. As soon as the closed graph $g^\prime$ is discovered, we create a hash key $key_{e^\prime} = \{f^\prime(e^\prime), f^\prime  \in \mathbb{F^\prime}\}$ for each edge $e^\prime \in E(g^\prime)$ and add entries $(key_{e^\prime}, g^\prime)$ to the hash table of the closed graphs.

The hash table is denoted as $CGHT$ (Closed Graphs Hash Table).

To make the key hashable, we double index each edge in $D$ with $(i,j)$ where $i$ is an index of a graph $G_i, G_i \in D$ and $j$ is an edge index in $G_i$. The double index injective function $E(G), G \in D\rightarrowtail\mathbb{N}\times\mathbb{N}$ is denoted as $\mathbb{EE}$ (Edge Enumeration).

Discovered closed graphs are added to the closed graphs hash table using the \textsc{Add\_Closed\_Graph} procedure in Fig.~\ref{alg:hash_table}. The \textsc{Create\_Edge\_Hash\_Key} function in Fig.~\ref{alg:hash_table} is called to create a hash key.

Table~\ref{Tab:HashTable} shows the Edge Enumeration of $D = \{G_1,G_2\}$ in Figure~\ref{fig:figure-1} and the hash table of the closed graphs state after the closed graphs $g_1^\prime$ and $g_2^\prime$ of $D$ were discovered.

\begin{table}[!htb]
\captionsetup{justification=centering, labelsep=newline}
\caption{Closed Graphs Hash Table}
\label{Tab:HashTable}
\subfloat[\textbf{Edge Enumeration} of $D = \{G_1,G_2\}$ in Figure \ref{fig:figure-1}]{%
\label{subtab:p1}
		\begin{tabular}{ |c|c| } 
			\hline
			Edge & Enumeration \\ [0.5ex] 
			\hline
			$G_1(v_1, v_2)$ & $(1,1)$ \\
			\hline
			$G_1(v_1, v_3)$ & $(1,2)$ \\
			\hline
			$G_1(v_3, v_4)$ & $(1,3)$ \\
			\hline
			$G_1(v_3, v_5)$ & $(1,4)$ \\
			\hline
			$G_1(v_3, v_6)$ & $(1,5)$ \\
			\hline
			$G_1(v_1, v_6)$ & $(1,6)$ \\
			\hline
			$G_2(v_1, v_2)$ & $(2,1)$ \\
			\hline
			$G_2(v_2, v_3)$ & $(2,2)$ \\
			\hline
			$G_2(v_2, v_4)$ & $(2,3)$ \\
			\hline
			$G_2(v_2, v_5)$ & $(2,4)$ \\
			\hline
			$G_2(v_1, v_5)$ & $(2,5)$ \\
			\hline
		\end{tabular}
}
	\resizebox{0.25\textwidth}{!}{%
	\subfloat[\textbf{Closed Graphs Hash Table} state after the closed graphs $g_1^\prime$ and $g_2^\prime$ of $D$ were discovered]{%
	\label{subtab:HashTable}
		\centering
		\begin{tabular}{ |c|c| }
			\hline
			Key & Closed Graphs \\ [0.5ex] 
			\hline
			$\{(1,2),(2,1)\}$ & $g_1^\prime$ \\
			\hline
			$\{(1,3),(2,2)\}$ & $g_1^\prime$ \\
			\hline
			$\{(1,5),(2,4)\}$ & $g_1^\prime$ \\
			\hline
			$\{(1,6),(2,5)\}$ & $g_1^\prime, g_2^\prime$ \\ 
			\hline
			$\{(1,1),(1,2),(2,1)\}$ & $g_2^\prime$ \\
			\hline
		\end{tabular}
		}
}
\end{table}

\begin{figure}[th!]
\begin{algorithmic}[1]
	\Function{Create\_Edge\_Hash\_Key}{$\mathbb{EE}, (v1,v2), \mathbb{F}$}
	\label{proc:proc_2}
	\Statex \hspace*{\algorithmicindent} \textbf{Input:} 
	\Statex \hspace*{\algorithmicindent} $\mathbb{EE}$ - edge enumeration of graphs dataset $D$
	\Statex \hspace*{\algorithmicindent} $(v1,v2)$ - edge
	\Statex \hspace*{\algorithmicindent} $\mathbb{F}$ - set of isomorphisms of $V, v_1,v_2 \in V$ into graphs dataset $D$
	\Statex \hspace*{\algorithmicindent} \textbf{Output:} 
	\Statex \hspace*{\algorithmicindent}$hash\_key$
	\State {$hash\_key \gets \{\}$}
	\ForAll{$f \in \mathbb{F}$}	
	\State {$hash\_key \gets hash\_key \cup EE((f(v_1), f(v_2)))$}
	\EndFor
	\State\Return $hash\_key$
	\EndFunction

	\Procedure{Add\_Closed\_Graph}{$\mathbb{CGHT}, \mathbb{EE}, g^\prime, \mathbb{F^\prime}$}
	\label{proc:proc_3}
	\Statex \hspace*{\algorithmicindent} \textbf{Input:} 
	\Statex \hspace*{\algorithmicindent} $\mathbb{CGHT}$ - closed graphs hash table
	\Statex \hspace*{\algorithmicindent} $\mathbb{EE}$ - edge enumeration of graphs dataset $D$
	\Statex \hspace*{\algorithmicindent} $g^\prime$ - closed graph
	\Statex \hspace*{\algorithmicindent} $\mathbb{F^\prime}$ - set of isomorphisms of $g^\prime$ into graphs dataset $D$
	\ForAll{$e^\prime \in E(g^\prime)$}	
	\State {$hash\_key \gets Create\_Edge\_Hash\_Key($}
	\State \hspace*{\algorithmicindent}{$\mathbb{EE}, e^\prime, \mathbb{F^\prime})$}
    \If{$CGHT[hash\_key] = \varnothing$}
    \State {$CGHT[hash_key] \gets \{g^\prime\}$}
    \Else
	\State {$CGHT[hash\_key] \gets CGHT[hash\_key] \cup g^\prime$}
    \EndIf
	\EndFor
	\State\Return
	\EndProcedure
	
\end{algorithmic}
\caption{Closed Graphs Hash Table} 
	\label{alg:hash_table}
\end{figure}

As soon as  $s \diamond_r e$ has to be checked for transitive equivalent occurrence with previously discovered closed graphs, we create a key $key_e = \{f(e), f \in \mathbb{F}\}$ and test transitive equivalent occurrences only with closed graphs, which are mapped by $key_e$ in the hash table of the closed graphs. This step is implemented by Line~\ref{proc:proc_4:line:hash_key_creation} and Line~\ref{proc:proc_4:line:closed_graphs_retrieval} of the \textsc{Early\_Termination}, Fig.~\ref{alg:early_termination}.\\
\\
To test whether $s \diamond_r e$ has transitive equivalent occurrence with a closed graph $g^\prime$, we must first find all possible isomorphisms $\mathbb{P} = \{\rho\}$ of $s \diamond_r e$ into $g^\prime$. To do this, we just have to choose an arbitrary isomorphism $f^\prime$ of $g^\prime$ into $G_i \in D$. Next we check all isomorphisms of $s \diamond_r e$ into $G_i \in D$. Every isomorphism $f$ of $s \diamond_r e$ into $G_i \in D$ that satisfies the condition $f(s \diamond_r e) \subset f^\prime(g^\prime)$ defines an isomorphism $\rho$ of $s \diamond_r e$ into $g^\prime$
$\rho(s \diamond_r e) = f^{\prime^{-1}}(f(s \diamond_r e))$. This step is implemented by lines ~\ref{proc:proc_4:line:find_isomorphisms_start} through ~\ref{proc:proc_4:line:find_isomorphisms_end} of the \textsc{Early\_Termination}, Fig.~\ref{alg:early_termination}.\\
\\
$s \diamond_r e$ and $g^\prime$ will have transitive equivalent occurrence if and only if one of the isomorphisms $\rho \in \mathbb{P}$ of $s \diamond_r e$ into $g^\prime$ satisfies the condition
$\forall f \in \mathbb{F}, \exists f^\prime \in \mathbb{F^\prime} f(s \diamond_r e) = f^\prime(\rho(g^\prime))$ i.e. wherever $s \diamond_r e$ occurs in $D$, $g^\prime$ must also occur exactly in the same place. If such an isomorphism $\rho$ is found for one of the closed graphs, an early termination should be applied to $s \diamond_r e$. 
This step is implemented by lines ~\ref{proc:proc_4:line:check_isomorphisms_start} through ~\ref{proc:proc_4:line:check_isomorphisms_end} of the \textsc{Early\_Termination}, Fig.~\ref{alg:early_termination}.\\
\\
For example, let's follow variable value assignments by \textsc{Early\_Termination} , Fig.~\ref{alg:early_termination}, in the processing of $D=\{G_1,G_2\}$ from Figure~\ref{fig:figure-1} when invoked with DFS code $\alpha=[(0,1,W,a,X), (1,2,X,d,Z)]$ and isomorphisms $f_1:V(\alpha) \to V(G_1), f_1(0)=v_1, f_1(1)=v_3, f_1(2)=v_6$  and $f_2:V(\alpha) \to V(G_2), f_2(0)=v_1, f_2(1)=v_2, f_2(2)=v_5$. The closed graphs hash table state in this invocation is shown in Table~\ref{Tab:HashTable} (b).\\
$hash\_key((1,2,X,d,Z)) \gets \mathbb{EE}((f_1(1), f_1(2))) \cup \mathbb{EE}((f_2(1), f_2(2))) = \mathbb{EE}(G_1(v_3, v_6)) \cup \mathbb{EE}(G_2(v_2, v_5)) = \{(1,5),(2,4)\}$\\
$G^\prime \gets \mathbb{CGHT}[\{(1,5),(2,4)\}] = \{g_1^\prime\}$\\
$\mathbb{F^\prime} \gets \{\\
f_1^\prime:V(g_1^\prime) \to V(G_1), f_1^\prime(v_1) = v_1, f_1^\prime(v_2) = v_3,  f_1^\prime(v_3) = v_4, f_1^\prime(v_4) = v_6\\
f_2^\prime:V(g_1^\prime) \to V(G_2), f_2^\prime(v_1) = v_1, f_2^\prime(v_2) = v_2,  f_2^\prime(v_3) = v_3, f_2^\prime(v_4) = v_5\}$\\
$\mathbb{P} \gets \{\rho:V(\alpha) \to V(g_1^\prime), \rho(0) = v_1, \rho(1) = v_2, \rho(2) = v_4\}$\\
$\forall v \in V(\alpha) f_1(v) = f_1^\prime(\rho(v))$ and $f_2(v) = f_2^\prime(\rho(v))$ therefore $true$ value is returned

\pagebreak

\begin{figure}[hbt!]
	\begin{algorithmic} [1]                   
		
		\Function{Early\_Termination}{$s \diamond_r e, \mathbb{F}, \mathbb{CGHT}, \mathbb{EE}$}
		\label{proc:proc_4}
		\Statex \hspace*{\algorithmicindent} \textbf{Input:}
		\Statex \hspace*{\algorithmicindent} $s \diamond_r e$ - graph checked for early termination
		\Statex \hspace*{\algorithmicindent} $\mathbb{F}$ - set of isomorphisms of $s \diamond_r e$ into graphs dataset $D$ 
		\Statex \hspace*{\algorithmicindent} $\mathbb{CGHT}$ - closed graphs hash table
		\Statex \hspace*{\algorithmicindent} $\mathbb{EE}$ - edge enumeration of graphs dataset $D$
		\Statex \hspace*{\algorithmicindent} \textbf{Output:} 
		\Statex \hspace*{\algorithmicindent} $true$ if early termination should be applied to $s \diamond_r e$ and $false$ otherwise. 
		\Statex \hspace*{\algorithmicindent} In case of $true$, also returns $g^\prime$ - the graph for which $s \diamond_r e$ has transitive equivalent occurrence and $\rho$ - the isomorphisms of $s \diamond_r e$ into $g^\prime$
		\State {$hash\_key \gets Create\_Edge\_Hash\_Key(\mathbb{EE}, e, \mathbb{F})$ \label{proc:proc_4:line:hash_key_creation}}
		\State {$G^\prime \gets \mathbb{CGHT}[hash\_key]$ \label{proc:proc_4:line:closed_graphs_retrieval}}
		\ForAll{$g^\prime \in G^\prime$}\label{closed_graphs}
		\State{$\mathbb{F^\prime} \gets$ isomorphisms of $g^\prime$ into $D$}
		\State {$\mathbb{P} \gets \varnothing$ \label{proc:proc_4:line:find_isomorphisms_start}}
		\State {select any $f^\prime$ from $\mathbb{F^\prime}$, $f^\prime:V(g^\prime) \to V(G_i), G_i \in D$}
		\ForAll{$f \in \mathbb{F}, f:V(s \diamond_r e) \to V(G_i)$}
		\If{$f(V(s \diamond_r e)) \subset f^\prime(V(g^\prime))$}
		\State {create $\rho:V(s \diamond_r e) \to V((g^\prime), \rho(v) = f^{\prime^{-1}}(f(v))$}
		\State {$\mathbb{P} \gets \mathbb{P} \cup \rho$ \label{proc:proc_4:line:find_isomorphisms_end}}
		\EndIf
		\EndFor
		\If{$\mathbb{P} = \varnothing$}
		\State \Goto{closed_graphs}	
		\EndIf	
		\ForAll{$\rho \in \mathbb{P}$ \label{proc:proc_4:line:check_isomorphisms_start}}
		\ForAll{$f \in \mathbb{F}$}
		\State {$ext\_subgraph\_isomorphism \gets false$}
		\ForAll{$f^\prime \in \mathbb{F^\prime}$}
		\If{$\forall v \in V(s \diamond_r e) f(v) = f^\prime(\rho(v))$}
		\State {$ext\_subgraph\_isomorphism \gets true$}
		\EndIf
		\EndFor
		\If{$ext\_subgraph\_isomorphism = false$}
		\State \Goto{proc:proc_4:line:check_isomorphisms_start}
		\EndIf
		\EndFor
		\State\Return $true, g^\prime, \rho$ \label{proc:proc_4:line:check_isomorphisms_end}
		\EndFor
		\EndFor
		\State\Return $false, \varnothing, \varnothing$
		\EndFunction
	\end{algorithmic}
	\caption{Early Termination} 
	\label{alg:early_termination}
\end{figure} 

\subsection{Handling Early Termination Failure}\label{subsection:EarlyTerminationFailure}
As stated in \cite{b8} there are special cases in which early termination cannot be applied.
One such example is provided in Figure~\ref{fig:figure-3}.

\begin{figure}[hbt!]
	\centering
	\subfloat[$G_1$]{%
		\begin{tikzpicture}[
			every edge/.style = {draw=black,very thick},
			vrtx/.style args = {#1/#2}{%
				circle, draw, thick, fill=white,
				minimum size=5mm, label=#1:#2}
			]
			\node(A) [vrtx=above/X] at (-1, 1) {};
			\node(B) [vrtx=above/Y] at (0, 1) {};
			\node(C) [vrtx=above/X] at (1,1) {};
			\node(D) [vrtx=below/Z] at (0,0) {};
			\path   (A) edge node[above]{$a$} (B)
			(B) edge node[above]{$b$} (C)
			(C) edge node[right]{$d$} (D)
			(A) edge node[left]{$c$} (D);
		\end{tikzpicture}
	}
	\hfil    
	\subfloat[$G_2$]{%
		\begin{tikzpicture}[
			every edge/.style = {draw=black,very thick},
			vrtx/.style args = {#1/#2}{%
				circle, draw, thick, fill=white,
				minimum size=5mm, label=#1:#2}
			]
			\node(A) [vrtx=above/X] at (-1, 1) {};
			\node(B) [vrtx=above/Y] at (0, 1) {};
			\node(C) [vrtx=above/X] at (1,1) {};
			\node(D) [vrtx=below/Z] at (0,0) {};
			\node(E) [vrtx=below/X] at (1,0) {};
			\path   (A) edge node[above]{$a$} (B)
			(B) edge node[above]{$b$} (C)
			(A) edge node[left]{$c$} (D)
			(D) edge node[below]{$d$} (E);
		\end{tikzpicture}
	}
	\hfil 
	\\
	\subfloat[Discovered closed graph $CG_1$]{%
		\begin{tikzpicture}[
			every edge/.style = {draw=black,very thick},
			vrtx/.style args = {#1/#2}{%
				circle, draw, thick, fill=white,
				minimum size=5mm, label=#1:#2}
			]
			\node(A) [vrtx=above/X] at (-1, 1) {};
			\node(B) [vrtx=above/Y] at (0, 1) {};
			\node(C) [vrtx=above/X] at (1,1) {};
			\node(D) [vrtx=below/Z] at (0,0) {};
			\path   (A) edge node[above]{$a$} (B)
			(B) edge node[above]{$b$} (C)
			(A) edge node[left]{$c$} (D);
		\end{tikzpicture}
	}
	\hfil 
	\subfloat[Missed closed graph $CG_2$]{%
		\begin{tikzpicture}[
			every edge/.style = {draw=black,very thick},
			vrtx/.style args = {#1/#2}{%
				circle, draw, thick, fill=white,
				minimum size=5mm, label=#1:#2}
			]
			\node(A) [vrtx=above/X] at (-1, 1) {};
			\node(B) [vrtx=above/Y] at (0, 1) {};
			\node(D) [vrtx=below/Z] at (0,0) {};
			\node(E) [vrtx=below/X] at (1,0) {};
			\path   (A) edge node[above]{$a$} (B)
			(A) edge node[left]{$c$} (D)
			(D) edge node[below]{$d$} (E);
		\end{tikzpicture}
	}
	\caption{\textbf{Early Termination Failure, copied from \cite[Figure~5]{b8}}}
	\label{fig:figure-3}
\end{figure}

cgSpan can effectively handle early termination failure cases.
When a new minimum DFS code $\alpha = (a_0, a_1, \ldots , a_m)$ is constructed by a DFS search, cgSpan checks whether another DFS code $\beta$ exists  so that:
\begin{enumerate}[label=(\roman*)]
	\item $\beta$ should not be early terminated \label{item:etf_1}
	\item $G_\beta \subset G_\alpha$, $G_\beta$ and $G_\alpha$ are graphs subscripted by DFS codes $\beta$ and $\alpha$ respectively. \label{item:etf_2}
	\item $G_\beta \not\subset G_{parent(\alpha)}$, $G_\beta$ and $G_{parent(\alpha)}$ are graphs subscripted by DFS codes $\beta$ and $parent(\alpha)  = (a_0, a_1, \ldots , a_{m-1})$ respectively. i.e. $G_\beta$ includes the right-most vertex of $\alpha$ \label{item:etf_3}
	\item $\beta$ has not yet been discovered \label{item:etf_4}
\end{enumerate}
cgSpan does not construct $\beta$ explicitly, but rather verifies if such $\beta$ exists by examining each known early termination failure case conditions. 

For example, when the DFS code $\alpha = [(0,1,X,a,Y),(1,2,Y,b,X),(0,3,X,c,Z)]$ in DFS search of $D=\{G_1,G_2\}$ from Figure~\ref{fig:figure-3} is discovered, cgSpan detects that the edge $Y \overset{\text{b}} \longrightarrow X$ is breakable according to the definition in \cite{b8}. In this case $\beta$ is the DFS code of the graph created by removing vertex $2$ from $\alpha$.

Such DFS codes, $\alpha$, which should not be used to terminate other DFS codes are inserted into a separate database using the \textsc{Detect\_Early\_Termination\_Failure} procedure in Fig.~\ref{alg:early_termination_failure}.

The database of DFS codes can be efficiently implemented by a trie like data structure \cite{b10} to provide a quick search for the stored DFS codes.
\begin{enumerate}[label=(\roman*)]
	\item The root node of the trie always represents the null node.
	\item Each node (except the root) stores a DFS code 5-tuple.
	\item Child nodes are sorted in lexicographical order.
\end{enumerate}

After the Early Termination conditions in line \ref{proc:proc_4:line:check_isomorphisms_end} of \textsc{Early\_Termination} in Fig.~\ref{alg:early_termination} are met, cgSpan applies procedure \textsc{Reject\_Early\_Termination}  in Fig.~\ref{alg:early_termination_failure} to check whether an early termination should be rejected.

\textsc{Reject\_Early\_Termination} finds relevant prefixes of the DFS code of terminating closed graph $g^\prime$. If any of the prefixes exists in Early Termination Failure DFS codes trie storage, early termination is rejected.

For example, for the cgSpan execution on $D=\{G_1,G_2\}$ from Figure~\ref{fig:figure-3}, early termination conditions are met for 
$s=[(0,1,X,a,Y),(0,2,X,c,Z)]$, closed graph $CG_1$ with a DFS code $\alpha^\prime = [(0,1,X,a,Y),(1,2,Y,b,X),(0,3,X,c,Z)]$ and isomorphism $\rho= \{0 \rightarrow 0, 1 \rightarrow 1, 2 	\rightarrow 3\}$ from $s$ into $\alpha^\prime$.

As explained above, at this point $DFS\_Codes\_Trie$ already contains the DFS code $[(0,1,X,a,Y),(1,2,Y,b,X),(0,3,X,c,Z)]$.\\
In line \ref{proc:proc_5:line:projection} $s$ is projected into $\alpha^\prime$ using $\rho$. The result is a set of edges in $\alpha^\prime$ $\mathrm{A}=\{(0,1,X,a,Y), (0,3,X,c,Z)\}$.

Line \ref{proc:proc_5:line:max_index} computes the maximum index of edge in $\alpha^\prime$ that belongs to set $\mathrm{A}$. The edge $(0,3,X,c,Z)$ is such an edge and its index in $\alpha^\prime$ is $n=2$.

The next lines \ref{proc:proc_5:line:prefix_loop_start} through \ref{proc:proc_5:line:prefix_loop_end} check whether the $\alpha^\prime$ prefix $[(0,1,X,a,Y),(1,2,Y,b,X),(0,3,X,c,Z)]$ exists in $DFS\_Codes\_Trie$. Since $DFS\_Codes\_Trie$ contains the prefix $[(0,1,X,a,Y),(1,2,Y,b,X),(0,3,X,c,Z)]$, the early termination is rejected in line \ref{proc:proc_5:line:reject}.

\begin{figure}[hbt!]
\begin{algorithmic}[1]
	\Statex \Procedure{Detect\_Early\_Termination\_Failure}{
	$\alpha$, $DFS\_Codes\_Trie$}\label{proc:Detect_Early_Termination_Failure}
	\Statex \hspace*{\algorithmicindent} \textbf{Input:} 
	\Statex \hspace*{\algorithmicindent} $\alpha$ - DFS code
	\Statex \hspace*{\algorithmicindent} $DFS\_Codes\_Trie$ - Early Termination Failure DFS codes trie storage.
	\ForAll{known early termination failure case $s$}	
	\If{$\alpha$ is instance of $s$}
	\State add $\alpha$ to $DFS\_Codes\_Trie$
	\EndIf
	\EndFor
	\EndProcedure
	\Statex \Procedure{Reject\_Early\_Termination}
	{$s$, $g^\prime$, $\rho$, $DFS\_Codes\_Trie$}
	\label{proc:proc_5}
	\Statex \hspace*{\algorithmicindent} \textbf{Input:} 
	\Statex \hspace*{\algorithmicindent} $s$ - DFS code
	\Statex \hspace*{\algorithmicindent} $g^\prime$ - closed graph
	\Statex \hspace*{\algorithmicindent} $\rho$ - isomorphism of $s$ into $g^\prime$
	\Statex \hspace*{\algorithmicindent} $DFS\_Codes\_Trie$ - Early Termination Failure DFS codes trie storage.
	\State $\alpha^\prime = (a_0, a_1, \ldots , a_m) \gets$ DFS code of $g^\prime$
	\State  $\mathrm{A} \gets \{a_{i_k} | a_{i_k} \in \rho(s), 1 \leq k \leq | E(s)|\}$
	\label{proc:proc_5:line:projection}
	\State $n \gets \max\limits_{a_{i_k} \in \mathrm{A}}(i_k)$
	\label{proc:proc_5:line:max_index}
	\label{proc:proc_5:line:prefix_loop_start}
	\State $\alpha = (a_0, \ldots , a_n)$\Comment{$\alpha$ is prefix of $\alpha^\prime$ up to index $n$}
	\If{$\alpha \in DFS\_Codes\_Trie$}
	\State\Return{true}
	\label{proc:proc_5:line:reject}
	\EndIf
	\label{proc:proc_5:line:prefix_loop_end}
	\State\Return{false}
	\EndProcedure
\end{algorithmic}
\caption{Early Termination Failure} 
	\label{alg:early_termination_failure}   
\end{figure}

\subsection{cgSpan Implementation}\label{subsection:cgSpanImplementation}
cgSpan algorithm is provided in Fig.~\ref{alg:cgSpan}.
\begin{figure}[hbt!]
	\begin{algorithmic}[1]
	\Statex cgSpan($D, min\_sup, S$)
	\Statex \hspace*{\algorithmicindent} \textbf{Input:} graph dataset  $D$,  $min\_sup$.
	\Statex \hspace*{\algorithmicindent} \textbf{Output:} The closed frequent graph set $S$.	
		
		\State $S \gets \varnothing$\Comment{initialize closed frequent graph set}\label{line:initialization_start}
		\State $CGHT \gets \varnothing$\Comment{initialize closed graphs hash table}
		\State $DFS\_Codes\_Trie \gets \varnothing$\Comment{initialize early termination failure DFS codes trie}\label{line:initialization_end}
		\State create $EE$, the Edge Enumeration of $D$\label{line:edges_enumeration}
		\State $\mathbb{S}^1 \gets$ all frequent 1-edge graphs in $D$ together with isomorphisms $F_e$ of the graph into $D$ \label{line:S1_start}
		\State sort $\mathbb{S}^1$ in DFS lexicographic order\label{line:S1_end}
		\ForAll{edge $e \in \mathbb{S}^1$}	
		\State initialize $s$ with $e$
		\State \Call{Subgraph\_Mining}
		{$s$, $F_e$, $min\_sup$, $S$, $EE$, $CGHT$, $DFS\_Codes\_Trie$}
		\EndFor
	\Procedure{Subgraph\_Mining}
	{$s$, $F$, $min\_sup$, $S$, $EE$, $CGHT$, $DFS\_Codes\_Trie$}
		\If{$s \neq min(s)$}\label{line:MinimalDFSCodeStart}
		\State
		\Return
		\EndIf\label{line:MinimalDFSCodeEnd}
		\State $terminate\_early,g,\rho \gets$ \Call{Early\_Termination}{$s,F,CGHT,EE$}\label{line:EarlyTerminationStart}
		\If{$terminate\_early$}
		\If{$\neg$ \Call{Reject\_Early\_Termination}
		{$s$, $g$, $\rho$, $DFS\_Codes\_Trie$}}\label{line:EarlyTerminationFailure}
		\State
		\Return\label{line:EarlyTermination0}
		\EndIf
		\EndIf
		\State \Call{Detect\_Early\_Termination\_Failure}
		{$s$, $DFS\_Codes\_Trie$}\label{line:DetectEarlyTerminationFailure}
		\State $C \gets \varnothing$\label{line:recursion_start}
		\State scan $D$ once, find every edge $e$ such that
		$s$ can be \emph{right-most} extended to frequent $s \diamond_r e$
		\State $F_{s \diamond_r e} \gets $ isomorphisms of $s \diamond_r e$ into $D$
                \If{$support(s \diamond_r e) \geq min\_sup$}
		\State insert $s \diamond_r e$ and $F_{s \diamond_r e}$ into $C$;
                \EndIf
                \State sort $C$ in DFS lexicographic order
        \ForAll{$s \diamond_r e$ in $C$}
        \State \Call{Subgraph\_Mining}{$s \diamond_r e$, $F_{s \diamond_r e}$, $min\_sup$ , $S$ ,$EE$ , $CGHT$, $DFS\_Codes\_Trie$}
        \EndFor\label{line:recursion_end}        
		\If{$C = \varnothing$ or $\forall s \diamond_r e \in C, s$ does not have equivalent occurrence with$s \diamond_r e$} \label{line:EarlyTermination1}
		\State \Call{Add\_Closed\_Graph}{$CGHT,EE,s,F$}\label{AddHT}
		\State insert $s$ into $S$;
		\State\Return;
		\EndIf\label{line:add_closed_graph}
		\State\Return;
		\EndProcedure
	\end{algorithmic}
	\caption{cgSpan algorithm} 
        \label{alg:cgSpan}
\end{figure}

{\ \\}\textbf{Step 1 (line \ref{line:initialization_start}-\ref{line:initialization_end}):} Initializes data structures.
\\ \textbf{Step 2 (line \ref{line:edges_enumeration}):} Enumerates edges in $D$.
\\ Table \ref{Tab:HashTable}(\subref{subtab:p1}) shows an example of such enumeration
\\ \textbf{Step 3 (line \ref{line:S1_start}-\ref{line:S1_end}):} Adds all frequent 1-edge graphs in $D$ and their isomorphisms into $D$ to $\mathbb{S}^1$ and sorts them in DFS lexicographic order.
\\ After executing this step for $D=\{G_1,G_2\}$ from Figure~\ref{fig:figure-1}, $\mathbb{S}^1$ contains $[(0,1,W,a,X), (0,1,W,f,Z), (0,1,X,b,Y), (0,1,X,d,Z)]$ with their respective isomorphisms into $D$.
\\ \textbf{Step 4 (line \ref{line:MinimalDFSCodeStart}-\ref{line:MinimalDFSCodeEnd}):} As in gSpan, this step prunes non minimum DFS codes. 
\\ \textbf{Step 5 (line \ref{line:EarlyTerminationStart}-\ref{line:EarlyTermination0}):} This step first checks whether the conditions for early termination are satisfied. See subsection \ref{subsection:EarlyTerminationImplementation} for details. If the early termination conditions evaluate to true, checks whether early termination can be applied. See subsection \ref{subsection:EarlyTerminationFailure} for details.
If this is the case, the further extension of the DFS code $s$ is terminated.
\\ \textbf{Step 6 (line \ref{line:DetectEarlyTerminationFailure}):} Detects whether $s$ can cause an early termination failure. See subsection \ref{subsection:EarlyTerminationFailure} for details.
\\ \textbf{Step 7 (line \ref{line:recursion_start}-\ref{line:recursion_end}):} As in gSpan, finds all frequent right-most extension of $s$. Recursively calls \textsc{Subgraph\_Mining} for each right-most extension following extensions lexicographical order.
\\ \textbf{Step 8 (line \ref{line:EarlyTermination1}-\ref{line:add_closed_graph}):} If $s$ has no equivalence occurrence with any of it's right-most extensions $s \diamond_r e$, adds closed graph $s$ to the result set $S$ and to the closed graphs hash table $CGHT$.

\begin{theorem}
After executing cgSpan( $D, min\_sup, S$), graph $G \in S$ iff $G$ is a closed graph in $D$
\end{theorem}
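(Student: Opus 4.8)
The statement is a biconditional, so the plan is to establish soundness ($G\in S \Rightarrow G$ closed) and completeness ($G$ closed $\Rightarrow G\in S$) separately. Both directions rest on the gSpan invariant, proved in \cite{b2} and enforced by Step~4, that the minimum-DFS-code test makes the DFS Code Tree enumerate every frequent connected subgraph of $D$ exactly once, at the node labelled by its minimum DFS code, in an order consistent with the DFS lexicographic order; in particular Theorem~\ref{DiscoveryOrder} applies to any pair of frequent subgraphs, and insertion into $S$ can occur only at Step~8, after $s$ survives the early-termination test of Step~5.

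For soundness I would argue the contrapositive: if the frequent graph $G$ with minimum DFS code $s$ is \emph{not} closed, cgSpan never inserts it. By Lemma~\ref{EarlyTerminationClosedEquivalentGraph}, $G$ admits a closed proper supergraph $g_n$ sharing transitive equivalent occurrence with it. Applying Theorem~\ref{DiscoveryOrder} to $G\subset g_n$ splits into two cases. In case~\ref{ii}, $g_n$ is discovered strictly before $s$ is processed, hence it already resides in $CGHT$, so \textsc{Early\_Termination} returns true for $s$ and $G$ is not inserted unless the termination is rejected. In case~\ref{i}, $s$ is a prefix of the code of $g_n$, so $g_n$ lies in the right-most-extension subtree rooted at $s$. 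In both remaining situations — case~\ref{i}, and case~\ref{ii} under a rejection — should $G$ reach Step~8 I must produce a frequent right-most extension $s\diamond_r e\in C$ with equivalent occurrence so that the guard fails: in case~\ref{i} this should follow from an occurrence-inheritance lemma along the right-most path to $g_n$, and under a rejection it should follow from the very structure of the failure case that triggers the rejection.

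For completeness, a closed $G$ with minimum DFS code $s$ is reached by the traversal unless some ancestor of $s$ was cut by early termination. The two lemmas show that generic termination is safe: Lemma~\ref{EarlyTerminationClosedEquivalentGraph} guarantees a terminated branch still leads to a closed graph with transitive equivalent occurrence, and Lemma~\ref{EarlyTerminationClosedGraphsDiscovery} guarantees that all closed supergraphs of an already-terminated graph are by then already discovered, so cutting the branch loses nothing. Once $s$ is reached and not terminated, the Step~8 guard holds automatically, since any right-most extension with equivalent occurrence would be a proper frequent supergraph with equivalent occurrence, contradicting closedness; hence $G$ is inserted. The only way a closed graph could be lost is when cutting a branch at some $s\diamond_r e$ would erase the unique discovery path of another closed graph; I must verify that in exactly these situations conditions~\ref{item:etf_1}--\ref{item:etf_4} hold, \textsc{Detect\_Early\_Termination\_Failure} has recorded the offending code, and \textsc{Reject\_Early\_Termination} fires, so the branch is retained.

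I expect the main obstacle to be the reconciliation of right-most extensions, which are all the algorithm inspects, with the general extensions through which closure is defined. Making case~\ref{i} rigorous requires an occurrence-counting lemma showing that equivalent occurrence is inherited along the right-most path from $G$ to $g_n$ — equivalently, that embeddings of $g_n$ restrict bijectively onto embeddings of $G$ — which is the technical content behind the introduction's claim~(i) that only right-most-path extensions need be examined. The second and, I believe, harder obstacle is on the completeness side: proving that the enumerated ``known'' early-termination-failure cases (such as the breakable-edge case of Figure~\ref{fig:figure-3}) are \emph{exhaustive}, i.e.\ that the existence of a $\beta$ satisfying \ref{item:etf_1}--\ref{item:etf_4} is both necessary and sufficient for a termination to orphan a closed graph. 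Since the algorithm's correctness rests entirely on that characterization, I would devote the bulk of the proof to establishing it.
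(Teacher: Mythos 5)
Your proposal takes essentially the same route as the paper's own proof: the same biconditional split, the same use of Lemma~\ref{EarlyTerminationClosedEquivalentGraph} and Theorem~\ref{DiscoveryOrder} in the soundness direction to place the closed supergraph $G^\prime$ either in the closed-graphs hash table (case~\ref{ii}) or below $s$ in the DFS Code Tree (case~\ref{i}, where the guard of Step~8 fails), and the same reliance on the \textsc{Reject\_Early\_Termination} mechanism to argue that no prefix of a closed graph's code is ever cut. The only real difference is one of candour: each step you flag as needing verification --- the occurrence-inheritance argument in case~\ref{i}, the behaviour after a rejected termination, and above all the exhaustiveness of the ``known'' early-termination-failure cases --- is a step the paper's proof does not establish either but simply asserts (exhaustiveness is only validated experimentally in Section~\ref{section:EXPERIMENTSANDPERFORMANCESTUDY}), so your plan is, if anything, more demanding than the published argument.
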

\begin{proof}{\ \\}
Frequent subgraph $G$ with a minimum DFS code $s$ will not be added to $S$ only if line~\ref{line:EarlyTermination0} in algorithm~\ref{alg:cgSpan} is reached or step~\ref{line:EarlyTermination1} in algorithm~\ref{alg:cgSpan} evaluates to false for $s$.
\begin{itemize}
\item if
\\ Suppose that $G$ is a closed subgraph in $D$. It is enough to show that line~\ref{line:EarlyTermination0} is never reached by a prefix of $s$ and step~\ref{line:EarlyTermination1} evaluates to true for $s$.
\\ Line~\ref{line:EarlyTermination0} can only be reached if a prefix of $G$ DFS minimum code $s$ is early terminated by another closed graph. By definition this would be an early termination failure case. Line~\ref{line:EarlyTerminationFailure} guarantees that early termination failure cases do not reach line~\ref{line:EarlyTermination0}. 
\\ Since $G$ is a closed graph, its DFS Code $s$ has no right-most extensions with equivalent occurrence and step~\ref{line:EarlyTermination1} in algorithm~\ref{alg:cgSpan} evaluates to true.

\item only if
\\ Let $G$ be a frequent not closed graph in $D$. Since $G$ is not closed, it has an extended equivalent occurrence with a closed graph $G^\prime$. 
\\ According to Theorem~\ref{DiscoveryOrder}, either $G^\prime$  is ancestor of $G$ or $G^\prime$  is discovered before $G$.
\\ If $G^\prime$ is an ancestor of $G$, then step~\ref{line:EarlyTermination1} in Algorithm~\ref{alg:cgSpan} evaluates to false for $G$ ($G$ has right-most extension with equivalent occurrence) and $G$ is not added to $S$.
\\Let $F^\prime$ be the set of isomorphisms of $G^\prime$ into $D$ and $F$ be the set of isomorphisms of $G$ into $D$. If $G^\prime$  is discovered before $G$, line~\ref{AddHT} will add entries ($Create\_Edge\_Hash\_Key(EE,e,F^\prime)$, $G^\prime$) to $CGHT$ for every edge $e \in E(G^\prime)$ before $G$ is discovered. Let $e_s$ be the last edge in $s$. Let $e^\prime$ be $e_s$ matching edge in $G^\prime$. The entry ($Create\_Edge\_Hash\_Key(EE,e^\prime,F^\prime)$, $G^\prime$) $\in CGHT$ when $e_s$ is discovered for $G$. Since $G^\prime$ is an extended equivalent occurrence of $G$, the keys $Create\_Edge\_Hash\_Key(EE,e^\prime,F^\prime)$ and $Create\_Edge\_Hash\_Key(EE,e_s,F)$ are identical. Therefore, the call to \textsc{Subgrap\_Mining} with $s$ is guaranteed to reach line~\ref{line:EarlyTermination0} and $G$ is not added to $S$.
\end{itemize}
\end{proof}

\section{Experiments and Performance Study}\label{section:EXPERIMENTSANDPERFORMANCESTUDY}

In our experiments we use the two most famous datasets in subgraph mining, Chemical\_340 and Coumpounds\_422. Both datasets were obtained from the datasets database \cite{b11} of the SPMF open source library \cite{b12}.

The basic characteristics of the Chemical\_340 and Coumpounds\_422 datasets are summarized in Table~\ref{Tab:Datasets}.

\begin{table}[h!]
\captionsetup{justification=centering, labelsep=newline}
 \caption{Chemical\_340 and Coumpounds\_422 datasets \label{Tab:Datasets}}
\resizebox{0.48\textwidth}{!} {%
\begin{tabular}{|| m{0.12\textwidth}| m{0.04\textwidth} | m{0.055\textwidth}| m{0.055\textwidth} | m{0.045\textwidth}| m{0.045\textwidth} ||}
 \hline
 Dataset Name&Graph count&Average node count per graph&Average edge count per graph&Vertex label count&Edge label count\\
 \hline\hline
Chemical\_340&340&27.02&27.40&66&4\\ 
 \hline
Coumpounds\_422&422&39.61&42.31&4&21\\
 \hline
 \end{tabular}%
}
 \end{table}
 
All experiments are done on a Intel(R) Core(TM) i7-7820HQ CPU @ 2.90GHz PC with 64.0 GB RAM, running 64-bit Windows 10.
 
cgSpan code is implemented with Python 3.6 and executed with PyPy 7.3.5.
    
cgSpan code is publicly available in \cite{b7}.

As part of our experiments, we have validated the completeness of early termination failure handling. The validation process was carried out by executing gSpan and filtering out all not closed frequent graphs from the gSpan output. The closed graphs set obtained by cgSpan execution was validated to be identical to the one obtained by gSpan execution and non closed graphs filtering.

Figure~\ref{Fig:MiningCompound} and Figure~\ref{Fig:MiningChemical} graphically depict the results of our tests of gSpan and cgSpan on Compounds\_422 and Chemical\_340. The data used to build Figure~\ref{Fig:MiningCompound} and Figure~\ref{Fig:MiningChemical} can be found in table~\ref{Tab:Compound Experiment Data} and table~\ref{Tab:Chemical Experiment Data} respectively.

In our experiments we found that the output of cgSpan can be roughly 10 percent the size of the output of gspan, and that the runtime of cgSpan can also be a fraction of the runtime of gSpan. The plots of cgSpan vs gSpan and frequent graphs vs closed frequent graphs in Figure~\ref{Fig:MiningCompound} should make our finding readily visible.

In other datasets cgSpan will continue to have a smaller output than gSpan; however, cgSpan may have a longer runtime than gSpan. The speed at which cgSpan completes compared to gSpan depends on the ratio of closed frequent graphs to frequent graphs in the provided dataset. For the Compounds\_422 dataset the ratio is low, as can be seen in Table~\ref{Tab:Compound Experiment Data} column 4, so cgSpan finishes much faster than gSpan. For the Chemical\_340 dataset the ratio is higher, see Table~\ref{Tab:Chemical Experiment Data} column 4, so cgSpan is slightly slower than gSpan. The aforementioned phenomena can be seen graphically in Figure~\ref{Fig:MiningChemical}.

Following our cgSpan vs gSpan testing we conducted further experiments on the value of early termination failure in the cgSpan algorithm. The results of experiments are in Table~\ref{Tab:Compound Early Termination Failure Data} and Table~\ref{Tab:Chemical Early Termination Failure Data}. We found that depending on the structure of the graphs in a given dataset early termination failure can be vitally important or inconsequential. For the Coumpounds\_422 dataset cgSpan with early termination failure handling can help detect almost 20 percent more graphs than cgSpan without early termination failure handling. For the Chemical\_340 dataset cgSpan with early termination failure handling found almost the exact same number of closed frequent graphs as did cgSpan without early termination failurehandling. Early termination failure handling is very valuable to the cgSpan algorithm as it helps guarantee the correctness of the algorithm.

cgSpan vs CloseGraph \cite{ b8} effectiveness can be concluded from the fact that cgSpan outperforms gSpan by a factor of 100 on Compounds\_422 dataset when $min\_sup$ is close to 5\%, while CloseGraph does the same only with a factor of 10. 
 
 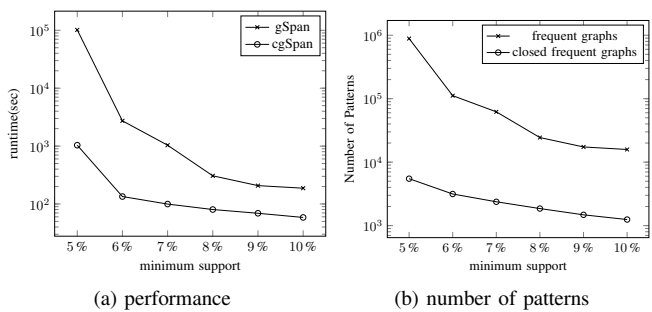
\begin{figure}[hbt!]
	\centering
	\subfloat[performance]{
	\centering
	\resizebox{0.24\textwidth}{!}{%
	\begin{tikzpicture}
    \begin{axis}[
        xlabel=minimum support,
        xtick={0.05, 0.06, 0.07, 0.08, 0.09, 0.1},
        ylabel=runtime(sec),
        ymode=log,
        xticklabel=\pgfmathparse{100*\tick}\pgfmathprintnumber{\pgfmathresult}\,\%,
        xticklabel style={/pgf/number format/.cd,fixed,precision=2}]
]
        \addplot[mark=x] coordinates {(0.05,101555.99) (0.06,2730.11) (0.07,1035.77) (0.08,306.62) (0.09,207.49) (0.1,187.44)};
        \addlegendentry{gSpan}
        \addplot[mark=o] coordinates {(0.05,1035.77) (0.06,134.43) (0.07,99.67) (0.08,80.22) (0.09,69.1) (0.1,58.54)};
        \addlegendentry{cgSpan}
    \end{axis}
    \end{tikzpicture}
    }%
    }
	\subfloat[number of patterns]{
	\centering
	\resizebox{0.23\textwidth}{!}{%
		\begin{tikzpicture}
    \begin{axis}[
        xlabel=minimum support,
        xtick={0.05, 0.06, 0.07, 0.08, 0.09, 0.1},
        ylabel=Number of Patterns,
        ymode=log,
        xticklabel=\pgfmathparse{100*\tick}\pgfmathprintnumber{\pgfmathresult}\,\%,
        xticklabel style={/pgf/number format/.cd,fixed,precision=2}]
]
        \addplot[mark=x] coordinates {(0.05,885864) (0.06,111611) (0.07,62092) (0.08,24402) (0.09,17355) (0.1,15832)};
        \addlegendentry{frequent graphs}
        \addplot[mark=o] coordinates {(0.05,5489) (0.06,3148) (0.07,2374) (0.08,1856) (0.09,1479) (0.1,1246)};
        \addlegendentry{closed frequent graphs}
    \end{axis}
    \end{tikzpicture}
    }%
	}
	\caption{Mining Patterns in Coumpounds\_422} \label{Fig:MiningCompound}
\end{figure}

\begin{figure}[!hbt]
	\centering
	\subfloat[performance]{
	\centering
	\resizebox{0.24\textwidth}{!}{%
	\begin{tikzpicture}
    \begin{axis}[
        xlabel=minimum support,
        xtick={0.02,0.03,0.04,0.05, 0.06, 0.07, 0.08, 0.09, 0.1},
        ylabel=runtime(sec),
        ymode=log,
        xticklabel=\pgfmathparse{100*\tick}\pgfmathprintnumber{\pgfmathresult}\,\%,
        xticklabel style={/pgf/number format/.cd,fixed,precision=2}]
]
        \addplot[mark=x] coordinates {(0.1,12.87)(0.09,14.27)(0.08,19.22)(0.07,25.77)(0.06,32.81)(0.05,61.36)(0.04,86.63)(0.03,248.3)(0.02,3808.79)};
        \addlegendentry{gSpan}
        \addplot[mark=o] coordinates {(0.1,15.17)(0.09,19.51)(0.08,25.93)(0.07,36.29)(0.06,52.46)(0.05,105.74)(0.04,181.17)(0.03,449.34)(0.02,5218.68)};
        \addlegendentry{cgSpan}
    \end{axis}
    \end{tikzpicture}
    }%
    }
	\subfloat[number of patterns]{
	\centering
	\resizebox{0.23\textwidth}{!}{%
		\begin{tikzpicture}
    \begin{axis}[
        xlabel=minimum support,
        xtick={0.02,0.03,0.04,0.05, 0.06, 0.07, 0.08, 0.09, 0.1},
        ylabel=Number of Patterns,
        ymode=log,
        xticklabel=\pgfmathparse{100*\tick}\pgfmathprintnumber{\pgfmathresult}\,\%,
        xticklabel style={/pgf/number format/.cd,fixed,precision=2}]
]
        \addplot[mark=x] coordinates {(0.1,844)(0.09,977)(0.08,1224)(0.07,1770)(0.06,2121)(0.05,3608)(0.04,5935)(0.03,18121)(0.02,136949)};
        \addlegendentry{frequent graphs}
        \addplot[mark=o] coordinates {(0.1,459)(0.09,552)(0.08,665)(0.07,857)(0.06,1029)(0.05,1771)(0.04,2793)(0.03,5425)(0.02,25205)};
        \addlegendentry{closed frequent graphs}
    \end{axis}
    \end{tikzpicture}
    }%
	}
	\caption{Mining Patterns in Chemical\_340} \label{Fig:MiningChemical}
\end{figure}
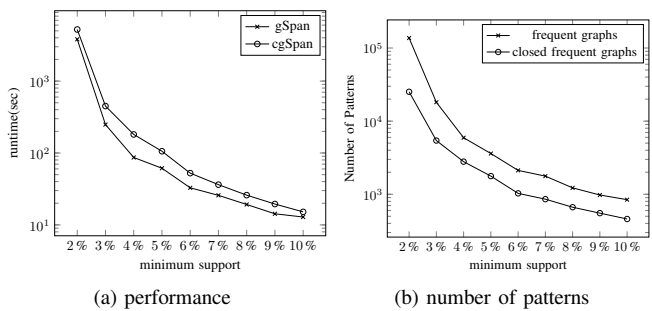

\begin{table}[hbt!]
\captionsetup{justification=centering, labelsep=newline}
\caption{Coumpounds\_422 experiment data \label{Tab:Compound Experiment Data}}
\resizebox{0.48\textwidth}{!} {%
    \begin{tabular}{|| m{0.07\textwidth}| m{0.07\textwidth} | m{0.07\textwidth}| m{0.07\textwidth} | m{0.07\textwidth}| m{0.07\textwidth} | m{0.07\textwidth} | m{0.07\textwidth} ||} 
        \hline
        Percentage from Compound Dataset & Number of Frequent Graphs (gSpan) & Number of Closed Graphs (cgSpan) & Number of Closed Graphs / Number of Frequent Graphs & gSpan execution time & cgSpan execution time & cgSpan execution time / gSpan execution time \\
        \hline\hline
        10 & 15832 & 1246 & 0.0787 & 187.44 & 58.54 & 0.312 \\ 
        \hline
        9 & 17355 & 1479 & 0.0852 & 207.49 & 69.1 & 0.333 \\
        \hline
        8 & 24402 & 1856 & 0.0761 & 306.62 & 80.22 & 0.262 \\
        \hline
        7 & 62092 & 2374 & 0.0382 & 1035.77 & 99.67 & 0.096 \\
        \hline
        6 & 111611 & 3148 & 0.0282 & 2730.11 & 134.43 & 0.049 \\
        \hline
        5 & 885864 & 5489 & 0.0062 & 101555.99 & 1035.77 & 0.0102 \\
        \hline
    \end{tabular}%
    }
\end{table}

\begin{table}[hbt!]
\captionsetup{justification=centering, labelsep=newline}
\caption{Chemical\_340 experiment data \label{Tab:Chemical Experiment Data}}
    \resizebox{0.48\textwidth}{!} {%
    \begin{tabular}{|| m{0.07\textwidth}| m{0.07\textwidth} | m{0.07\textwidth}| m{0.07\textwidth} | m{0.07\textwidth}| m{0.07\textwidth} | m{0.07\textwidth} | m{0.07\textwidth} ||} 
        \hline
        Percentage from Chemical Dataset & Number of Frequent Graphs (gSpan) & Number of Closed Graphs (cgSpan) & Number of Closed Graphs / Number of Frequent Graphs & gSpan execution time & cgSpan execution time & cgSpan execution time / gSpan execution time \\
        \hline\hline
        10 & 844 & 459 & 0.5438 & 12.87 & 15.17 & 1.1787 \\ 
        \hline
        9 & 977 & 552 & 0.5650 & 14.27 & 19.51 & 1.3672 \\
        \hline
        8 & 1224 & 665 & 0.5433 & 19.22 & 25.93 & 1.3491 \\
        \hline
        7 & 1770 & 857 & 0.4842 & 25.77 & 36.29 & 1.4082 \\
        \hline
        6 & 2121 & 1029 & 0.4851 & 32.81 & 52.46 & 1.5989 \\
        \hline
        5 & 3608 & 1771 & 0.4909 & 61.36 & 105.74 & 1.7233 \\
        \hline
        4 & 5935 & 2793 & 0.4706 & 86.63 & 181.17 & 2.0913 \\
        \hline
        3 & 18121 & 5425 & 0.2994 & 248.3 & 449.34 & 1.8097 \\
        \hline
        2 & 136949 & 25205 & 0.1840 & 3808.79 & 5218.68 & 1.3701 \\
        \hline
    \end{tabular}*
    }
\end{table}

\begin{table}[hbt!]
\captionsetup{justification=centering, labelsep=newline}
\caption{Coumpounds\_422 early termination experiment data \label{Tab:Compound Early Termination Failure Data}}
\resizebox{0.48\textwidth}{!} {%
    \begin{tabular}{|| m{0.1\textwidth}| m{0.1\textwidth} | m{0.1\textwidth} | m{0.1\textwidth} ||} 
        \hline
        Percentage from Compound Dataset & Closed Graphs Found (cgSpan No Early Termination Failure) & Closed Graphs Found (cgSpan) & Closed Graphs (cgSpan) / Closed Graphs (No ETF) \\
        \hline\hline
        10 & 1092 & 1246 & 1.14  \\ 
        \hline
        9 & 1284 & 1479 & 1.15\\
        \hline
        8 & 1576 & 1856 & 1.18 \\
        \hline
        7 & 2008 & 2374 & 1.18 \\
        \hline
        6 & 2616 & 3148 & 1.20\\
        \hline
        5 & 4547 & 5489 & 1.21\\
        \hline
        4 & 13242 & 14698 & 1.11\\
        \hline
    \end{tabular}*
    }
\end{table}

\begin{table}[hbt!]
\captionsetup{justification=centering, labelsep=newline}
\caption{Chemical\_340 early termination experiment data} \label{Tab:Chemical Early Termination Failure Data}
    \resizebox{0.48\textwidth}{!} {%
    \begin{tabular}{|| m{0.1\textwidth}| m{0.1\textwidth} | m{0.1\textwidth} | m{0.1\textwidth} ||}
        \hline
        Percentage from Chemical Dataset & Closed Graphs Found (cgSpan No Early Termination Failure) & Closed Graphs Found (cgSpan) & Closed Graphs (cgSpan) / Closed Graphs (No ETF) \\
        \hline\hline
        10 & 459 & 459 & 1 \\ 
        \hline
        9 & 552 & 552 & 1\\
        \hline
        8 & 665 & 665 & 1 \\
        \hline
        7 & 857 & 857 & 1\\
        \hline
        6 & 1029 & 1029 & 1 \\
        \hline
        5 & 1765 & 1771 & 1.0034 \\
        \hline
        4 & 2764 & 2793 & 1.0105 \\
        \hline
        3 & 5363 & 5425 & 1.0116 \\
        \hline
    \end{tabular}*
    }
\end{table}

\section{Conclusions}\label{section:CONCLUSIONS}
We have shown that the gSpan algorithm can be efficiently extended to output only closed graphs.

For future work we consider the extension of cgSpan to handle directed graphs. In \cite{b13} the extension of gSpan to directed graphs is described. Since cgSpan is an extension of gSpan, the same approach can be used to extend cgSpan to directed graphs.

\section*{Acknowledgment}
We would like to express our gratitude to Prof. Doan AnHai for bringing us together to work on this project and providing support and guidance.


\begin{thebibliography}{00}

\bibitem{b1} Eric W. Weisstein.  Star graph. From MathWorld—A Wolfram Web Re-source.https://mathworld.wolfram.com/StarGraph.html.
\bibitem{b2} Xifeng Yan and Jiawei Han.  gspan:  Graph-based substructure patternmining.  In2002 IEEE International Conference on Data Mining, 2002.Proceedings., pages 721–724. IEEE, 2002.
\bibitem{b3} Informatica. What is extract transform load (etl)?\\
https://www.informatica.com/services-and-training/glossary-of-terms/extract-transform-load-definition.html.
\bibitem{b4} Chen Qingying and Karpov Nikolay. gspan.\\ https://github.com/betterenvi/gSpan, 2016.
\bibitem{b5} Nowozin Sebastian and Kudo Taku. gboost.\\ https://github.com/rkwitt/gboost, 2007.
\bibitem{b6} Tony Zhu. gspan.java. https://github.com/TonyZZX/gSpan.Java, 2018.
\bibitem{b7} Zevin Shaul and Sheikh Naaz.  cgspan.\\ https://github.com/NaazS03/cgspan, 2021.
\bibitem{b8} Xifeng Yan and Jiawei Han. Closegraph: Mining closed frequent graph pat-terns. InProceedings of the Ninth ACM SIGKDD International Conferenceon Knowledge Discovery and Data Mining, KDD ’03, page 286–295, NewYork, NY, USA, 2003. Association for Computing Machinery.
\bibitem{b9} Thomas H Cormen, Charles E Leiserson, Ronald L Rivest, and CliffordStein.Introduction to algorithms, page 253–280. Massachusetts Instituteof Technology., 3 edition, 2009.
\bibitem{b10} Edward Fredkin. Trie memory.Commun. ACM, 3(9):490–499, September1960.
\bibitem{b11} PhilippeFournier-Viger. Spmf Datasets. http://www.philippe-fournier-viger.com/spmf/index.php?link=datasets.php,2016.
\bibitem{b12} Philippe Fournier-Viger, Jerry Chun-Wei Lin, Antonio Gomariz, TedGueniche, Azadeh Soltani, Zhihong Deng, and Hoang Thanh Lam. Thespmf open-source data mining library version 2. InJoint European conference on machine learning and knowledge discovery in databases, pages36–40. Springer, 2016.
\bibitem{b13} Cane Wing-ki Leung. Technical notes on extending gspan to directedgraphs. Technical report, Technical Report, Management University, Sin-gapore, 2010.
\end{thebibliography}
\end{document}